\documentclass{scrartcl}
\usepackage{amsmath,amssymb}
\usepackage{amsthm}
\newtheorem{theorem}{Theorem}
\newtheorem{lemma}[theorem]{Lemma}

\newtheorem{remark}{Remark}
\usepackage{graphicx}
\usepackage{xcolor}
\usepackage{tikz}
\usetikzlibrary{positioning}
\usepackage{microtype}
\usepackage{hyperref}
\usepackage{booktabs}
\usepackage{adjustbox}
\usepackage{multirow}
\usepackage{natbib}
\newcommand{\R}{\mathbb{R}}
\newcommand{\indicator}[1]{\mathbf{1}_{#1}}
\newcommand{\probability}[1]{\operatorname{P}\left[#1\right]}
\newcommand{\expectation}[1]{\operatorname{\mathbb{E}}\left[#1\right]}
\newcommand{\pto}{\stackrel{\mathrm{p}}{\to}}
\newcommand{\randomfeatures}{\mathbf{X}}
\newcommand{\features}{\mathbf{x}}
\newcommand{\forward}{\mathrm{f}}
\newcommand{\backward}{\mathrm{b}}
\newcommand{\validaton}{\mathrm{val}}
\newcommand{\node}{e}
\newcommand{\parent}{\mathrm{parent}}
\newcommand{\leftchild}{L}
\newcommand{\rightchild}{R}
\newcommand{\indexTraining}{J}
\newcommand{\indexValidation}{I}
\newcommand{\noise}{\varepsilon}
\newcommand{\abs}[1]{\left\lvert #1 \right\rvert}
\newcommand{\signedHet}{\mathrm{Het}^{\pm}}
\newcommand{\signedBias}{\mathrm{Bias}^{\pm}}
\newcommand{\estSignedHet}{\widehat{\mathrm{Het}}^{\pm}}
\newcommand{\estSignedBias}{\widehat{\mathrm{Bias}}^{\pm}}
\newcommand{\estHet}{\widehat{\mathrm{Het}}}
\newcommand{\estBias}{\widehat{\mathrm{Bias}}}
\newcommand{\estHetV}{\widehat{\mathrm{Het}}^v}
\newcommand{\estBiasV}{\widehat{\mathrm{Bias}}^v}
\newcommand{\sgn}{\operatorname{sgn}}
\newcommand{\unobserved}[1]{\small{\textit{#1}}}
\newcommand{\observed}[1]{\textbf{#1}}
\definecolor{purplesignal}{HTML}{785ef0}
\definecolor{bluesignal}{HTML}{648fff}
\newcommand{\firstbest}[1]{\textbf{\underline{\color{bluesignal} #1}}}
\newcommand{\secondbest}[1]{\textbf{{\color{purplesignal} #1}}}

\title{Splitting the Difference: Interpretable Causal Forests for Treatment Effect Heterogeneity and Bias}
\author{Nicolas Alexander Ihlo\thanks{nicolas.ihlo@ur.de}, Merle Behr\thanks{merle.behr@ur.de}\\ Faculty of Informatics and Data Science\\ University of Regensburg, Germany}
\date{September 15, 2026}

\begin{document}
\maketitle

\begin{abstract}
  In various fields, such as medicine and marketing, accurately predicting individual treatment effects holds significant promise. However, achieving reliable predictions alone is often insufficient for making informed decisions; it is equally important to understand why the treatment effect is higher for some individuals than for others. To address this two-fold challenge of prediction and interpretation, we introduce an algorithm based on decision trees and random forests for estimating individual treatment effects. Our algorithm is simple: it operates exactly like a standard random forest, but with a different splitting criterion, and requires no additional workarounds such as double machine learning or orthogonalization as used in Generalized random forests. It handles observational studies with varying treatment propensities without requiring separate estimation of the full propensity function. This is achieved by combining two splitting criteria---one targeting heterogeneity in the treatment effect, the other targeting bias correction for the average treatment effect---which together improve split point selection and automatically distinguish confounders from features responsible for heterogeneity. As a result, interpretation follows directly from the fitted tree structure itself, that is, from which features the trees split on and with which split statistics, without requiring separate post-hoc analysis. For the theoretical analysis of this algorithm, we consider a change point model with step functions for potential outcomes and treatment propensity and provide insights into the theoretical underpinnings of our approach. Simulation studies show that our simple algorithm achieves comparable, and often better, prediction accuracy than existing methods, while substantially improving interpretability.
\end{abstract}

\emph{Keywords:} causal inference, statistical machine learning, interpretable machine learning, heterogeneous treatment effect estimation

\section{Introduction}
Estimating treatment effects, i.e., changes in an outcome due to some intervention, is of interest across many fields \citep{manski_identification_1993,imbens_causal_2015,angrist_mastering_2015}, including medicine, marketing, public policy, and economics. Beyond average effects, many applications benefit from individual-level estimates: for example, estimating the effectiveness of a drug for a specific patient enables individualized medicine. For such estimates to be useful in practice, however, accurate predictions are often not enough by themselves; understanding why the treatment effect differs across individuals, i.e., interpreting the estimation method, is equally important. In this paper, we focus on the estimation of individual treatment effects with a particular emphasis on interpretability.

In many settings, treatment effects cannot be estimated from randomized experiments and one has to rely on observational data instead. This introduces the problem of confounding: features that influence both the outcome and the treatment propensity, and which can bias treatment effect estimates even when they do not affect the treatment effect itself \citep{hernan_causal_2020}. Crucially, this means a feature can appear relevant to treatment effect estimation for two entirely different reasons: either because it is a confounder that must be corrected for, or because it genuinely drives treatment effect heterogeneity. Distinguishing between these two cases is precisely the interpretation we are interested in. Throughout this paper, we assume that all confounders are observed, i.e., we rule out hidden confounding, and focus on the estimation of individual treatment effects under observed confounding.

Machine learning (ML) methods are well suited to this task, as they can learn flexible structures directly from data with little manual adjustment. General supervised learning methods, originally developed for classification or regression, can be adapted for treatment effect estimation, for example via metalearners \citep{kunzel_metalearners_2019} and double/debiased machine learning (DML) \citep{chernozhukov_doubledebiased_2018}; other methods are purpose-built for causal inference, building on neural networks, for example TarNet \citep{shalit_estimating_2017}, DragonNet \citep{shi_adapting_2019}, and RA-Net \citep{curth2021nonparametric}. However, many of the most powerful ML methods act as black boxes and offer little insight into their prediction mechanism, even though such insight is often required for interpretation and downstream decision-making. 

Random forest (RF) \citep{breiman_random_2001}, originally developed for supervised learning but since extended to causal inference as well, offers a compromise between these two goals: it achieves state-of-the-art prediction accuracy, particularly on tabular data \citep{shmuel_comprehensive_2025}, while remaining, to a certain extent, interpretable through its underlying tree structure. This structure has, for instance, been used to gain insight into the mechanism of the fitted forest, e.g., by feature importance statistics such as mean decrease in impurity (MDI) \citep{breiman_random_2001}, as well as several extensions and other approaches, e.g.,  MDI+ \citep{agarwal_integrating_2025}, iRF \citep{basu_2018}, LSSFind \citep{behr_provable_2022}, and TreeSHAP \citep{lundberg_local_2020}.

Several RF-type algorithms have been proposed for estimating treatment effects, and it is this line of work that we build on and extend in this paper; we review these approaches in the following subsection. Our main motivation for building on RF, rather than a black-box method such as a neural network, is its interpretability, and accordingly, our goal is to preserve as much of this interpretability as possible in the RF-type algorithm we propose.

\subsection{Previous Tree-based Methods for Causal Inference}
A popular adaptation of RF is causal forest \citep{wager_estimation_2018}, which itself uses a simplified version of the causal-inference-specific adaptation of decision trees, causal tree, introduced by \citet{athey_recursive_2016}. While causal forest proposes two possible procedures, we focus here on procedure 1 (double sample trees), as only it incorporates outcomes in the construction of the trees and is therefore able to model treatment effect heterogeneity via the tree structure. Both causal tree and causal forest build trees by selecting splits that maximize the variance of treatment effect predictions, a criterion derived from an analogy to mean squared error (MSE) minimization in regression (see Section~\ref{Sec:splitting-rule} for details); however, this analogy relies on properties of the regression setting that do not hold in general for causal inference, and the resulting susceptibility to confounding has been noted previously, e.g., by \citet{athey_generalized_2019}. Notably, among the tree-based methods discussed below, causal forest is the only one that preserves the simple structure of the original RF: the individual treatment effect is estimated directly as the average prediction of a single ensemble of decision trees, each of which remains interpretable on its own.

Other adaptations of tree-based methods for causal inference typically sacrifice this simplicity. Metalearners \citep{kunzel_metalearners_2019}, for instance, do not estimate the treatment effect directly with a single forest, but instead combine separate forests fit to the outcomes under treatment and under control, so that the treatment effect estimate is no longer the direct output of one interpretable ensemble. Generalized random forest (GRF) \citep{athey_generalized_2019}, used as an alternative implementation of causal forest, instead modifies the splitting criterion via gradient-based approximations to an estimating equation and a correspondingly changed prediction mechanism, while additionally incorporating an orthogonalization step to reduce the influence of confounders. This orthogonalization step is similar in spirit to DML, reflected in the name ``CausalForestDML'' used for its implementation in the popular package EconML \citep{econml}. While effective at reducing confounding bias, this two-step procedure again departs from the direct, single-ensemble structure that makes RF interpretable in the first place. A method to improve feature importance for treatment effect heterogeneity in GRF was recently developed by \citet{benard_2025}, but requires additional post-hoc steps which are computationally costly, and, more importantly, does not address our goal of interpreting the fitted tree structure itself. Further adaptations include orthogonal random forest \citep{oprescu_orthogonal_2019}; see \citet{jiang_short_2021} for an overview and comparison of these methods.

Causal forest is therefore the only method that retains the interpretable, single-ensemble structure of RF for individual treatment effect estimation. However, as noted above, it does not explicitly account for confounders during split selection, so splits driven purely by confounding cannot be distinguished from splits that reflect genuine treatment effect heterogeneity, undermining the very interpretability its tree structure would otherwise offer. In this paper, we propose a modification of RF for causal inference, denoted as IntCF (interpretable causal forest), that follows the general idea of causal forest but uses a different splitting criterion, one that explicitly accounts for confounders at the split selection step. This direct inclusion of both heterogeneity and confounder detection in the tree structure opens the possibility for improved interpretation, especially for distinguishing features based on how they affect the treatment effect prediction, while preserving the simple, single-ensemble structure that makes RF interpretable.

\subsection{Confounding versus Heterogeneity: an Illustrative Example}
In the following, we provide an illustrative example, which demonstrates why the original splitting criterion of causal forest \citep{wager_estimation_2018} fails to result in interpretable tree structures under confounding, and how our new splitting criterion in IntCF improves on this.
Consider the data in Table~\ref{Tab:potential_outcomes-example}, with binary treatment $T$, potential outcomes $Y^{T=1}$ and $Y^{T=0}$, and two binary features $X_1, X_2$. For each sample, only the upright, bold outcome can be observed; the outcome in italics is unobservable. The treatment effect $Y^{T=1} - Y^{T=0}$ is zero for every sample. However, directly estimating the average treatment effect from the observed outcomes gives a biased estimate: the difference between the mean observed treated outcome $\frac{0.5+1.5+1.5}{3}=\frac{7}{6}$ and the mean observed control outcome $\frac{0.5+0.5+1.5}{3}=\frac{5}{6}$ is $\frac{1}{3} \neq 0$, because of the higher prevalence of treatment among samples with higher potential outcomes.
\begin{table}[tb]
  \centering
  \begin{tabular}{l|rrrrrr}
    \toprule
    Sample & 1 & 2 & 3 & 4 & 5 & 6 \\
    \midrule
    Treatment $T$ & 0 & 1 & 0 & 1 & 0 & 1 \\
    Treated outcome $Y^{T=1}$ & \unobserved{0.5} & \observed{0.5} & \unobserved{0.5} & \observed{1.5} & \unobserved{1.5} & \observed{1.5} \\
    Control outcome $Y^{T=0}$ & \observed{0.5} & \unobserved{0.5} & \observed{0.5} & \unobserved{1.5} & \observed{1.5} & \unobserved{1.5} \\
    Feature $X_1$ & 0 & 0 & 0 & 1 & 1 & 1 \\
    Feature $X_2$ & 0 & 0 & 0 & 0 & 1 & 1 \\
    \bottomrule
  \end{tabular}
  \caption{Example for data in a potential outcomes model, with binary treatment $T$, potential outcomes $Y^{T=1}$ and $Y^{T=0}$, and two binary features $X_1, X_2$. Note that for each sample only the outcomes corresponding to the treatment assignment can be observed (shown in bold). The outcomes in italics are unobserved and cannot be used for estimating the treatment effect.}\label{Tab:potential_outcomes-example}
\end{table}

Splitting the data into two subsets based on feature $X_1$ would eliminate this bias, leading to conditional average treatment effect estimates of 0 in both subsets. However, the splitting criterion used in causal forest \citep{wager_estimation_2018}, which is based on heterogeneity in outcomes, will not select this split, as the estimated outcome is the same on both sides. Even worse, the split on feature $X_2$, which results in an estimate of $\frac{1}{2}$ for the treatment effect of samples 1--4, an apparent effect that is itself an artifact of confounding rather than true heterogeneity, would seem preferable.

This is in contrast to the modification we propose in IntCF below. There, we explicitly combine two different splitting criteria, one for heterogeneity in the treatment effect and one for bias correction. With this approach, the change in predicted average treatment effect from splitting on $X_1$ is recognized as bias correction rather than heterogeneity, so the split is still used when building the tree. Interpretability is preserved, since the split is explicitly labeled as reducing bias rather than as improving the estimated treatment effect heterogeneity.

Applying IntCF to this example yields the decision stump shown in Figure~\ref{Fig:tree_stumps_example} (left) after the first split. While IntCF estimates the correct treatment effect in this example, causal forest produces a biased estimate, as the confounding introduced by feature $X_1$ is not removed (Figure~\ref{Fig:tree_stumps_example}, right). Our combined splitting criterion in IntCF correctly indicates that the selected split corrects bias rather than reflecting treatment effect heterogeneity.

\begin{figure}[tb]
  \centering
  \tikzset{sibling distance=35mm, level distance=25mm, align=center, every node/.style={anchor=north,rectangle}, every child node/.style={draw, minimum size=12mm}}
  \begin{tikzpicture}
    \node[draw,minimum size=20mm, text width=55mm] (intcf) {$X_1 \leq 0.5$\\ \textbf{Split criterion}: $\max(\estHet, \estBias)$\\ [2mm] Heterogeneity criterion: $\estHet=0$\\ Bias criterion\footnotemark: $\estBias=\frac{1}{9}$\\ [2mm] samples: 1, 2, 3, 4, 5, 6\\ estimate \( \hat\tau = \frac{1}{3} \)}
      child {node {samples: 1, 2, 3\\ estimate \( \hat\tau = 0 \)}}
      child {node {samples: 4, 5, 6\\ estimate \( \hat\tau = 0 \)}};
    \node [above=2mm of intcf] {IntCF};
  \end{tikzpicture}
  \hspace{1cm}
  \begin{tikzpicture}
    \node[draw,minimum size=20mm, text width=55mm] (causalforest) {$X_2 \leq 0.5$\\ \textbf{Split criterion:} $\widehat{\mathrm{Het}}=\frac{1}{18}$\\ \emph{no separate criteria \\ for heterogeneity \\ and bias} \\ samples: 1, 2, 3, 4, 5, 6\\ estimate \( \hat\tau = \frac{1}{3} \)}
      child {node {samples: 1, 2, 3, 4\\ estimate \( \hat\tau = \frac{1}{2} \)}}
      child {node {samples: 5, 6\\ estimate \( \hat\tau = 0 \)}};
    \node [above=2mm of causalforest] {Causal forest};
  \end{tikzpicture}
 \caption{Decision stumps after the first split for the data in Table~\ref{Tab:potential_outcomes-example}, using the combined splitting criterion of our proposed method, IntCF (left), and the criterion of the original causal forest algorithm \citep{wager_estimation_2018} (right). $\hat\tau$ denotes the estimated treatment effect in each node, computed as the difference between the mean observed outcome under treatment and under control. IntCF selects the split on $X_1$, correctly identifying it as bias correction rather than heterogeneity, and yields unbiased estimates in both child nodes; causal forest instead selects the split on $X_2$, which appears to indicate heterogeneity but remains biased due to the unaddressed confounding from $X_1$.}\label{Fig:tree_stumps_example}
\end{figure}
\footnotetext{Note that our bias criterion is an estimate for the squared bias.}% Footnote for Figure Fig:tree_stumps_example

\subsection{Outline of Paper}
The remainder of this paper is organized as follows. In Section~\ref{Sec:setting}, we introduce the considered setting, notation, and data model. In Section~\ref{Sec:algorithm}, we introduce the IntCF algorithm, with a particular focus on our new splitting criteria, as well as an additional step for honest predictions and validation of splits. In Section~\ref{Sec:theory}, we derive theoretical properties of our new splitting criteria under a change point model. In Section~\ref{Sec:simulations}, an extensive simulation study, including a semi-synthetic benchmark as well as a real data example, demonstrates that IntCF achieves competitive prediction accuracy while substantially improving interpretability. Finally, in Section~\ref{Sec:discussion}, we discuss our results and outline open questions.

\section{Preliminaries}\label{Sec:setting}
\subsection{Data and Potential Outcomes Model}\label{Sec:data_model}
We consider a potential outcomes framework \citep{rubin_estimating_1974}, in which the data generating process is described by the joint distribution of
\[
    (T, X, Y^{T=0}, Y^{T=1}),
\]
with binary treatment $T \in \{0,1\}$, a feature vector $X \in \R^d$, and potential outcomes $Y^{T=0}$ and $Y^{T=1}$. For each unit, only the potential outcome corresponding to the realized treatment is observed, i.e., we observe $Y \sim Y^{T}$, while the other potential outcome remains unobserved.
We aim to predict the \emph{conditional average treatment effect (CATE)}
\begin{equation}\label{Eq:tau}
    \tau(\features) := \expectation{Y^{T=1} - Y^{T=0} \mid \randomfeatures = \features},
\end{equation}
for a given feature vector \( \features \in \R^d \); throughout this paper, we refer to \( \tau(\features) \) as the individual treatment effect, since it is the treatment effect estimate available given the observed covariates \( \features \).
Beyond predicting \( \tau(\features) \), we are interested in interpreting treatment effect heterogeneity, i.e., in identifying which of the $d$ features actually influence \( \tau(\features) \).
We observe $N$ independent and identically distributed copies of \( (T, \randomfeatures, Y) \),
\[
    (t_i, \features_{i} = (x_{i, j})_{j=1}^{d}, y_i), \quad i = 1, \dotsc, N.
\]
The main difficulty in estimating $\tau(\features)$ is that, for each observation $i$, only one of $y_i^{T=1}$ and $y_i^{T=0}$ is observed; the individual treatment effect 
\[
    \tau_i := y_i^{T=1} - y_i^{T=0}
\]
can therefore never be observed, so standard machine learning methods for regression, which require direct observations of the target variable, cannot be applied to estimate $\tau$ directly.

We further define
\begin{equation}\label{Eq:propensity}
  p(\features) := P(T=1 \mid \randomfeatures = \features)
\end{equation}
for the \emph{treatment propensity}, and
\begin{equation}\label{Eq:main_effect}
  \mu(\features) := \frac{1}{2} \expectation{Y^{T=0} + Y^{T=1} \mid \randomfeatures = \features}
\end{equation}
for the \emph{main outcome}.

\subsection{Decision Trees and RFs}\label{Sec:intro-rf}
Decision trees for classification and regression tasks were proposed by \citet{breiman_classification_1984} and later extended to RFs \citep{breiman_random_2001}. RFs use a collection of decision trees, whose predictions are combined (for regression, usually by taking the average) to obtain an ensemble prediction. Each tree divides the feature space into subsets by performing binary splits based on a threshold applied to one of the features at each inner node. For deciding which split to use, a splitting criterion is applied, most commonly the impurity decrease according to an impurity measure, for example mean squared error (MSE) for regression.

Specifically, when selecting a split for a node \( \node_{\parent} \), which contains samples with indices in \( \indexTraining_{\parent} \), a collection of potential splits is considered; this might be all possible splits or a subset of these. Each potential split creates two child nodes, \( \node_{\leftchild} \) and \( \node_{\rightchild} \), containing samples with indices \( \indexTraining_{\leftchild} \) and \( \indexTraining_{\rightchild} \), respectively, such that \( \indexTraining_{\parent} = \indexTraining_{\leftchild} \sqcup \indexTraining_{\rightchild} \) is a disjoint union. We denote the number of samples in the left and right child nodes by \( n_{\leftchild} = \abs{\indexTraining_{\leftchild}} \) and \( n_{\rightchild} = \abs{\indexTraining_{\rightchild}} \), respectively. In a regression setting, each training sample has an attached response value \( y_i \). For each of these three nodes, a prediction and an impurity value are computed; comparing these then yields the impurity decrease for the candidate split. Finally, the potential split with the highest impurity decrease is selected as the actual split, and the procedure is repeated recursively at the two child nodes until a stopping criterion is reached.

\section{Interpretable Causal Tree and Forest Algorithm}\label{Sec:algorithm}

\subsection{Correcting Heterogeneity Splitting Rule for Confounding}\label{Sec:splitting-rule}
As outlined in Section~\ref{Sec:intro-rf}, a central element for the construction of decision trees from training data is the splitting rule. In this section, we will derive a new splitting rule, which can be applied to data from a potential outcomes model in a causal inference setting. Recall that the main difficulty compared to a regression setting is that for each observation $i$, only one of $y_i^{T=1}$ and $y_i^{T=0}$ is observed; the individual treatment effect 
$\tau_i = y_i^{T=1} - y_i^{T=0}$ can never be observed directly, hence, standard splitting rules as used for decision trees in regression, cannot be applied directly. Previous work has therefore considered different splitting criteria which overcome this problem \citep[see, e.g.,][]{athey_recursive_2016,wager_estimation_2018,athey_generalized_2019}. 

To motivate our new splitting criteria, as well as previously considered splitting criteria for causal forest \citep{wager_estimation_2018}, we start with a recap of the impurity decrease via MSE in the regression setting, the standard split statistic of RF implementations. There, the splitting criterion is obtained by maximizing among possible splits $\indexTraining_{\parent} = \indexTraining_{\leftchild} \sqcup \indexTraining_{\rightchild}$ the decrease in MSE, that is,
\begin{equation}\label{eq:mseRegression}
    \Delta^{\mathrm{RF}}(\leftchild, \rightchild) := \frac{1}{N} \sum_{i\in\indexTraining_{\parent}} (\hat{y}_{\parent} - y_{i})^{2} - \frac{1}{N} \left(\sum_{i\in\indexTraining_{\leftchild}} (\hat{y}_{\leftchild} - y_{i})^{2} + \sum_{i\in\indexTraining_{\rightchild}} (\hat{y}_{\rightchild} - y_{i})^{2}\right)
\end{equation}
with node predictions obtained as averages, that is, $\hat{y}_{\parent} := \frac{1}{n_\leftchild + n_\rightchild} \sum_{i\in\indexTraining_{\parent}} y_i$ and similar
\begin{equation}\label{eq:averagePredRegression}
  \hat{y}_{\leftchild} := \frac{1}{n_\leftchild} \sum_{i\in\indexTraining_{\leftchild}} y_i \quad \text{and} \quad
  \hat{y}_{\rightchild} := \frac{1}{n_\rightchild} \sum_{i\in\indexTraining_{\rightchild}} y_i.  
\end{equation}
Note that for such averages we have
\begin{equation}\label{eq:meanPredRegression}
    \hat{y}_{\parent} = \frac{n_\leftchild \cdot \hat{y}_{\leftchild} + n_\rightchild \cdot \hat{y}_{\rightchild}}{n_\leftchild + n_\rightchild}.
\end{equation}
Using~\eqref{eq:averagePredRegression}, it is easy to see that~\eqref{eq:mseRegression} is equal to
\begin{equation}\label{eq:mseRegression2}
 \frac{1}{N} \left(n_\leftchild (\hat{y}_\parent - \hat{y}_\leftchild)^2 + n_\rightchild (\hat{y}_\parent - \hat{y}_\rightchild)^2\right).
\end{equation}
Moreover, using~\eqref{eq:meanPredRegression}, it is easy to see that~\eqref{eq:mseRegression2} is equal to
\begin{equation}\label{eq:mseRegression3}
    \frac{n_\leftchild \cdot n_\rightchild}{N (n_\leftchild + n_\rightchild)} \left(\hat{y}_\leftchild - \hat{y}_\rightchild\right)^2.
\end{equation}
Note that~\eqref{eq:mseRegression3} equals, up to a factor of \( \frac{n_\leftchild + n_\rightchild}{N} \) (which is constant within each node), the variance among the predictions at the two child nodes. 

For the causal inference setting with outcome of interest being the (unobserved) treatment effect $\tau_i$, the direct analog to the splitting criterion from~\eqref{eq:mseRegression} is to consider
\begin{equation}\label{eq:mseCausal}
    \frac{1}{N} \sum_{i\in\indexTraining_{\parent}} (\hat{\tau}_{\parent} - \tau_{i})^{2} - \frac{1}{N} \left(\sum_{i\in\indexTraining_{\leftchild}} (\hat{\tau}_{\leftchild} - \tau_{i})^{2} + \sum_{i\in\indexTraining_{\rightchild}} (\hat{\tau}_{\rightchild} - \tau_{i})^{2}\right). 
\end{equation}
For the node predictions \( \hat{\tau}_{\parent}, \hat{\tau}_{\leftchild} \) and \( \hat{\tau}_{\rightchild} \) one can obtain natural estimates with the \emph{difference-in-means estimator} at a specific node $\node$, that is,
\begin{equation}\label{Eq:ate-estimate}
  \hat{\tau}_{\node} := \frac{1}{\abs{\{i \in \indexTraining_{\node}: t_{i} = 1\}}} \sum_{\substack{i \in \indexTraining_{\node} \\ t_{i}=1}} y_{i} - \frac{1}{\abs{\{i \in \indexTraining_{\node}: t_{i} = 0\}}} \sum_{\substack{i \in \indexTraining_{\node} \\ t_{i}=0}} y_{i},
\end{equation}
e.g., for the parent node $\node = \parent$, for the left child node $\node = \leftchild$, or for right child node $\node = \rightchild$.

The problem with the decrease in MSE as in~\eqref{eq:mseCausal} is, however, that the individual treatment effects $\tau_i$ are not observed and hence, one cannot use~\eqref{eq:mseCausal} for a splitting criterion.
Therefore, the proposal of \citet{wager_estimation_2018} for causal forest was to consider the analog of~\eqref{eq:mseRegression3} instead. That means, they propose to select splits $\indexTraining_{\parent} = \indexTraining_{\leftchild} \sqcup \indexTraining_{\rightchild}$ such that the variance among the predicted treatment effects at the child nodes is maximized, i.e., they maximize at each node
\begin{equation}\label{Eq:heterogeneity}
  \Delta^{\mathrm{CF}}(\leftchild, \rightchild) :=  \estHet := \frac{n_{\leftchild} \cdot n_{\rightchild}}{N (n_{\leftchild} + n_{\rightchild})} (\hat{\tau}_{\leftchild} - \hat{\tau}_{\rightchild})^{2}.
\end{equation}
We will call this the \emph{heterogeneity criterion}.
The major problem with using the heterogeneity criterion~\eqref{Eq:heterogeneity} as a replacement for the decrease in MSE in~\eqref{eq:mseCausal}, is that the corresponding analog equations to~\eqref{eq:averagePredRegression} and~\eqref{eq:meanPredRegression}, which were needed to derive the equivalence between~\eqref{eq:mseRegression} and~\eqref{eq:mseRegression3} in the regression setting, do not hold in the causal inference setting.
More precisely, the analog to~\eqref{eq:averagePredRegression} is given by the condition
\begin{equation}\label{eq:averagePredCausal}
    \hat{\tau}_{\leftchild} = \frac{1}{n_{\leftchild}} \sum_{i\in\indexTraining_{\leftchild}} \tau_{i} \quad \text{and} \quad \hat{\tau}_{\rightchild} = \frac{1}{n_{\rightchild}} \sum_{i\in\indexTraining_{\rightchild}} \tau_{i}
\end{equation}
and the analog to~\eqref{eq:meanPredRegression} is given by the condition
\begin{equation}\label{eq:meanPredCausal}
    \hat{\tau}_{\parent} = \frac{n_{\leftchild} \hat{\tau}_{\leftchild} + n_{\rightchild} \hat{\tau}_{\rightchild}}{n_{\leftchild} + n_{\rightchild}}.
\end{equation}
In contrast to the regression setting, \eqref{eq:averagePredCausal} and~\eqref{eq:meanPredCausal} are not guaranteed to hold in the causal inference setting. However, if these conditions do hold, one can derive equivalence between~\eqref{Eq:heterogeneity} and~\eqref{eq:mseCausal}, as the following two lemmas show.
\begin{lemma}\label{lem:mse_to_squared_change}
  If~\eqref{eq:averagePredCausal} holds, then the \emph{decrease in MSE}~\eqref{eq:mseCausal} equals the \emph{mean squared change in prediction}
\begin{equation}\label{Eq:mean-squared-change}
  \frac{1}{N} \left(n_{\leftchild} (\hat{\tau}_{\parent} - \hat{\tau}_{\leftchild})^{2} + n_{\rightchild} (\hat{\tau}_{\parent} - \hat{\tau}_{\rightchild})^{2}\right).
\end{equation}
\end{lemma}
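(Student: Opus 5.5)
The plan is a direct algebraic expansion, mirroring the regression derivation of~\eqref{eq:mseRegression2} from~\eqref{eq:mseRegression}. Since $\indexTraining_{\parent} = \indexTraining_{\leftchild} \sqcup \indexTraining_{\rightchild}$, I will split the parent sum in~\eqref{eq:mseCausal} into a sum over $\indexTraining_{\leftchild}$ and a sum over $\indexTraining_{\rightchild}$. Then~\eqref{eq:mseCausal} becomes $\frac{1}{N}\sum_{c\in\{\leftchild,\rightchild\}} D_c$, where
\[
D_c := \sum_{i\in\indexTraining_c}\Big[(\hat\tau_\parent-\tau_i)^2-(\hat\tau_c-\tau_i)^2\Big].
\]
The goal is therefore to show $D_c = n_c(\hat\tau_\parent-\hat\tau_c)^2$ for each child $c$ separately.

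For a fixed child $c$, I will write $\hat\tau_\parent-\tau_i = (\hat\tau_\parent-\hat\tau_c) + (\hat\tau_c-\tau_i)$ and expand the square. This gives
\[
D_c = n_c(\hat\tau_\parent-\hat\tau_c)^2 + 2(\hat\tau_\parent-\hat\tau_c)\sum_{i\in\indexTraining_c}(\hat\tau_c-\tau_i).
\]
By the hypothesis~\eqref{eq:averagePredCausal}, $\hat\tau_c$ is exactly the average of the $\tau_i$ over $\indexTraining_c$. Hence the cross term $\sum_{i\in\indexTraining_c}(\hat\tau_c-\tau_i)$ vanishes. Summing over $c$ and multiplying by $\frac1N$ then yields~\eqref{Eq:mean-squared-change}.

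There is no real obstacle here. The only point worth stressing is that $\hat\tau_\parent$ enters merely as a constant shift, so no relation such as~\eqref{eq:meanPredCausal} is needed. This is exactly why the lemma assumes only~\eqref{eq:averagePredCausal}, while~\eqref{eq:meanPredCausal} is reserved for the second step, passing from~\eqref{Eq:mean-squared-change} to~\eqref{Eq:heterogeneity}.
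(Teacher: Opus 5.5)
Your proof is correct and is essentially the paper's argument: both split the parent sum using $\indexTraining_{\parent} = \indexTraining_{\leftchild} \sqcup \indexTraining_{\rightchild}$ and use~\eqref{eq:averagePredCausal} to remove the terms linear in $\tau_i$. The paper does this by expanding every square and re-completing the square at the end, while you use a per-child add-and-subtract step that kills the cross term directly; your remark that~\eqref{eq:meanPredCausal} is not needed is also right, since the paper's computation does not use it either.
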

The proof is given in the appendix. Note that~\eqref{Eq:mean-squared-change} is the analog to~\eqref{eq:mseRegression2} from the regression setting.
\begin{lemma}\label{lem:squared_change_to_het_criterion}
  If~\eqref{eq:meanPredCausal} holds, then the mean squared change in prediction~\eqref{Eq:mean-squared-change} equals the heterogeneity criterion $\estHet$ in~\eqref{Eq:heterogeneity}.
\end{lemma}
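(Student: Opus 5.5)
The claim is a purely algebraic identity, so the plan is to substitute~\eqref{eq:meanPredCausal} into~\eqref{Eq:mean-squared-change} and simplify. This mirrors the regression-setting passage from~\eqref{eq:mseRegression2} to~\eqref{eq:mseRegression3}, with $\hat{y}$ replaced by $\hat{\tau}$. Write $n_{\parent} := n_{\leftchild} + n_{\rightchild}$ for brevity.

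First, compute the two differences that appear in~\eqref{Eq:mean-squared-change}. Under~\eqref{eq:meanPredCausal},
\[
\hat{\tau}_{\parent} - \hat{\tau}_{\leftchild} = \frac{n_{\leftchild}\hat{\tau}_{\leftchild} + n_{\rightchild}\hat{\tau}_{\rightchild} - (n_{\leftchild}+n_{\rightchild})\hat{\tau}_{\leftchild}}{n_{\parent}} = \frac{n_{\rightchild}}{n_{\parent}}\,(\hat{\tau}_{\rightchild} - \hat{\tau}_{\leftchild}),
\]
and, by the symmetric computation,
\[
\hat{\tau}_{\parent} - \hat{\tau}_{\rightchild} = \frac{n_{\leftchild}}{n_{\parent}}\,(\hat{\tau}_{\leftchild} - \hat{\tau}_{\rightchild}).
\]

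Second, square both differences and insert them into~\eqref{Eq:mean-squared-change}. This gives
\[
\frac{1}{N}\left(\frac{n_{\leftchild} n_{\rightchild}^{2}}{n_{\parent}^{2}} + \frac{n_{\rightchild} n_{\leftchild}^{2}}{n_{\parent}^{2}}\right)(\hat{\tau}_{\leftchild} - \hat{\tau}_{\rightchild})^{2}.
\]
The bracketed factor equals $n_{\leftchild} n_{\rightchild}(n_{\rightchild} + n_{\leftchild})/n_{\parent}^{2}$, which reduces to $n_{\leftchild} n_{\rightchild}/n_{\parent}$. Multiplying by $1/N$ yields exactly $\estHet$ as defined in~\eqref{Eq:heterogeneity}.

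There is no real obstacle; the argument is direct algebra. The only point to state explicitly is that the lemma uses only~\eqref{eq:meanPredCausal}, not~\eqref{eq:averagePredCausal}. Also, we implicitly assume $n_{\leftchild}, n_{\rightchild} \geq 1$ so that all quantities are well defined, which holds for any proper split.
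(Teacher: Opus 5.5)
Your proof is correct and is essentially the paper's own argument: substitute~\eqref{eq:meanPredCausal}, write the two differences as $\frac{n_{\rightchild}}{n_{\leftchild}+n_{\rightchild}}(\hat{\tau}_{\rightchild}-\hat{\tau}_{\leftchild})$ and $\frac{n_{\leftchild}}{n_{\leftchild}+n_{\rightchild}}(\hat{\tau}_{\leftchild}-\hat{\tau}_{\rightchild})$, square, and collapse the weights to $\frac{n_{\leftchild} n_{\rightchild}}{n_{\leftchild}+n_{\rightchild}}$. A minor aside: for the node estimates to be defined, each child needs at least one treated and one control unit, not merely $n_{\leftchild}, n_{\rightchild} \geq 1$, but this does not affect the identity.
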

The proof is given in the appendix. 

Both~\eqref{eq:averagePredCausal} and~\eqref{eq:meanPredCausal} may fail to hold under confounding, and either violation breaks the equivalence between the heterogeneity criterion $\estHet$ in~\eqref{Eq:heterogeneity} and the decrease in MSE in~\eqref{eq:mseCausal}. However, the two assumptions differ fundamentally in nature. A violation of~\eqref{eq:meanPredCausal} is a property of the candidate split itself: for any given split, $\hat\tau_{\parent}$, $\hat\tau_{\leftchild}$, and $\hat\tau_{\rightchild}$ can be computed directly from the observed data, and one can check whether the parent estimate coincides with the sample-size-weighted average of the child estimates. Note that even in a randomized controlled trial, i.e., in the absence of any confounding, \eqref{eq:averagePredCausal} and~\eqref{eq:meanPredCausal} will typically not hold exactly, simply due to finite-sample noise. However, without confounding both conditions hold in expectation.
Under confounding, in contrast, they need not hold even in expectation. A strong violation of~\eqref{eq:meanPredCausal}, beyond what would be expected from sampling noise alone, therefore indicates a systematic effect rather than a chance fluctuation and, as illustrated in our example above, arises precisely when treatment propensity differs between the two child nodes, i.e., it is a direct symptom of confounding along the considered split. This makes it possible to explicitly measure and correct for such violations as part of the splitting criterion.

In contrast, \eqref{eq:averagePredCausal} is not a property of any particular split, but a standing identification assumption: it requires that, conditional on already being in a given node, the difference-in-means estimator~\eqref{Eq:ate-estimate} is unbiased for the true average treatment effect within that node, i.e., that no unobserved confounding remains once conditioned on the node. This within-node unconfoundedness assumption underlies the use of the difference-in-means estimator at any node regardless of the tree structure, and is required by essentially every tree-based method for treatment effect estimation, including our own leaf-level estimates. Since it is not tied to any single candidate split, it cannot be assessed or corrected by comparing $\hat\tau_{\parent}$, $\hat\tau_{\leftchild}$, and $\hat\tau_{\rightchild}$ at a given step, but only by conditioning on the relevant confounders while growing the tree as a whole. We therefore focus our new splitting criterion on correcting for violations of~\eqref{eq:meanPredCausal}, while retaining~\eqref{eq:averagePredCausal} as a standing assumption, consistent with its role throughout the tree-based causal inference literature.

As Lemma~\ref{lem:squared_change_to_het_criterion} shows, whenever the mean squared change in prediction~\eqref{Eq:mean-squared-change} and the heterogeneity criterion $\estHet$ in~\eqref{Eq:heterogeneity} are not equal, this can be directly attributed to a violation of~\eqref{eq:meanPredCausal} and hence, a bias correction. We therefore treat the difference between~\eqref{Eq:mean-squared-change} and~\eqref{Eq:heterogeneity} as one component of our splitting criterion, quantifying the extent to which a split corrects for confounding, as opposed to detecting heterogeneity in the treatment effect. The following Lemma gives an explicit expression how this difference corresponds to the violation of~\eqref{eq:meanPredCausal}.
\begin{lemma}\label{lem:derivation_bias_criterion}
  The difference of~\eqref{Eq:mean-squared-change} and~\eqref{Eq:heterogeneity} is given by
\begin{equation}\label{Eq:bias}
  \estBias := \frac{n_{\leftchild} + n_{\rightchild}}{N} \left(\hat{\tau}_{\parent} - \frac{n_{\leftchild} \hat{\tau}_{\leftchild} + n_{\rightchild} \hat{\tau}_{\rightchild}}{n_{\leftchild}+n_{\rightchild}}\right)^{2}.
\end{equation}
\end{lemma}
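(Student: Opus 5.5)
The identity is purely algebraic and holds for arbitrary real numbers $\hat\tau_\parent,\hat\tau_\leftchild,\hat\tau_\rightchild$, so the plan is a direct computation with no appeal to \eqref{eq:averagePredCausal} or \eqref{eq:meanPredCausal}. Write $n := n_\leftchild + n_\rightchild$ and introduce the weighted average of the child estimates,
\[
  \bar\tau := \frac{n_\leftchild \hat\tau_\leftchild + n_\rightchild \hat\tau_\rightchild}{n},
\]
which is exactly the quantity that \eqref{eq:meanPredCausal} would require to equal $\hat\tau_\parent$. The strategy is to rewrite \eqref{Eq:mean-squared-change} by inserting $\bar\tau$: for $e \in \{\leftchild,\rightchild\}$,
\[
  \hat\tau_\parent - \hat\tau_e = (\hat\tau_\parent - \bar\tau) + (\bar\tau - \hat\tau_e).
\]
Then we expand the square in each of the two summands.

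This yields three groups of terms after multiplying by $n_e/N$ and summing over $e$. The first group is $\frac{n}{N}(\hat\tau_\parent-\bar\tau)^2$, which is precisely $\estBias$ in \eqref{Eq:bias}. The second group is the cross term $\frac{2}{N}(\hat\tau_\parent - \bar\tau)\sum_e n_e(\bar\tau - \hat\tau_e)$. It vanishes because $\sum_e n_e \hat\tau_e = n\bar\tau$ by the definition of $\bar\tau$. This orthogonality is the only real point of the argument, in the same sense as the bias--variance decomposition.

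The third group is $\frac{1}{N}\bigl(n_\leftchild(\bar\tau-\hat\tau_\leftchild)^2 + n_\rightchild(\bar\tau-\hat\tau_\rightchild)^2\bigr)$. This is \eqref{Eq:mean-squared-change} with $\hat\tau_\parent$ replaced by $\bar\tau$, and such a $\hat\tau_\parent$ satisfies \eqref{eq:meanPredCausal}. So Lemma~\ref{lem:squared_change_to_het_criterion} applies to this third group and identifies it with $\estHet$ in \eqref{Eq:heterogeneity}. Alternatively, one can verify this directly: $\bar\tau - \hat\tau_\leftchild = \frac{n_\rightchild}{n}(\hat\tau_\rightchild-\hat\tau_\leftchild)$ and symmetrically for the right child, so the sum is
\[
  \frac{n_\leftchild n_\rightchild^2 + n_\rightchild n_\leftchild^2}{N n^2}(\hat\tau_\leftchild - \hat\tau_\rightchild)^2 = \frac{n_\leftchild n_\rightchild}{N n}(\hat\tau_\leftchild-\hat\tau_\rightchild)^2 .
\]

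Combining the three groups gives $\eqref{Eq:mean-squared-change} = \estHet + \estBias$, which is the claim. I expect no genuine obstacle. The only care needed is to use Lemma~\ref{lem:squared_change_to_het_criterion} legitimately, since it is stated for the actual $\hat\tau_\parent$. Either we note that its proof is an identity in the parent value under the hypothesis \eqref{eq:meanPredCausal}, or we give the two-line direct verification above, which avoids the issue.
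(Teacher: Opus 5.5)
Your proof is correct, but it is organized differently from the paper's. The paper never introduces $\bar\tau$: it expands the squares in \eqref{Eq:mean-squared-change} and in \eqref{Eq:heterogeneity}, collects the coefficients of $\hat{\tau}_{\parent}^2$, $\hat{\tau}_{\parent}\hat{\tau}_{\leftchild}$, $\hat{\tau}_{\parent}\hat{\tau}_{\rightchild}$, $\hat{\tau}_{\leftchild}^2$, $\hat{\tau}_{\leftchild}\hat{\tau}_{\rightchild}$ and $\hat{\tau}_{\rightchild}^2$, and then recognizes the result as $(n_{\leftchild}+n_{\rightchild})/N$ times a completed square. You instead insert the weighted child mean $\bar\tau$ and use the vanishing cross term $\sum_e n_e(\bar\tau-\hat{\tau}_e)=0$, which is a parallel-axis decomposition.

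Your route makes the bias--variance reading of \eqref{Eq:biasVarianceDecomp} literal. $\estBias$ is the squared deviation of the parent estimate from the weighted child mean, and $\estHet$ is the weighted spread of the child estimates around that mean. It also shows that Lemma~\ref{lem:squared_change_to_het_criterion} is exactly the special case $\hat{\tau}_{\parent}=\bar\tau$, where the first term drops out, so the two lemmas become one computation. The paper's expansion needs no auxiliary quantity and no reference to the previous lemma, but it hides this structure.

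You were right to worry about invoking Lemma~\ref{lem:squared_change_to_het_criterion} with $\bar\tau$ in place of $\hat{\tau}_{\parent}$. Its proof only uses the identity $\hat{\tau}_{\parent}=\bar\tau$, so the substitution is legitimate. Still, your two-line direct verification is the cleaner way to close that step.
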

We call~\eqref{Eq:bias} the \emph{bias criterion}.
The proof is given in the appendix. 

In total, we have shown the following decomposition
\begin{equation}\label{Eq:biasVarianceDecomp}
    \text{MSE decrease} \approx \frac{1}{N} \left(n_{\leftchild} (\hat{\tau}_{\parent} - \hat{\tau}_{\leftchild})^{2} + n_{\rightchild} (\hat{\tau}_{\parent} - \hat{\tau}_{\rightchild})^{2}\right) = 
\estHet + \estBias.
\end{equation}
Note that the decomposition in~\eqref{Eq:biasVarianceDecomp} is analog to a classical bias-variance decomposition. The two criteria, $\estHet$ and $\estBias$, capture different reasons for improved predictions: identified heterogeneity (which we measure by the variance of predictions) and bias correction (which we estimate by the square of change of average prediction).

To improve interpretability of the final trees, we want to separate these two reasons. Therefore, at each node, we select the split that has the strongest effect from either of these two sources. As both the heterogeneity criterion $\estHet$ in~\eqref{Eq:heterogeneity} and the bias criterion $\estBias$ in~\eqref{Eq:bias} are directly derived from the mean squared change~\eqref{Eq:mean-squared-change}, their values are comparable. Hence, with IntCF we propose to \emph{use the maximum of both criteria to select the split}. That is, our splitting criterion for IntCF is obtained by maximizing among possible splits $\indexTraining_{\parent} = \indexTraining_{\leftchild} \sqcup \indexTraining_{\rightchild}$ the maximum of heterogeneity and bias improvement, that is,
\begin{equation}\label{Eq:IntCF}
    \Delta^{\mathrm{IntCF}}(\leftchild, \rightchild) := \max\left(\estHet, \estBias\right).
\end{equation}

For our theoretical analysis, we also consider signed versions of these criteria, which we refer to as signed splitting criteria, defined by
\begin{equation}\label{Eq:signed-het}
  \estSignedHet := \frac{\sqrt{n_{\leftchild} \cdot n_{\rightchild}}}{n_{\leftchild} + n_{\rightchild}} (\hat{\tau}_{\leftchild} - \hat{\tau}_{\rightchild}),
\end{equation}
the \emph{signed heterogeneity}, and
\begin{equation}\label{Eq:signed-bias}
  \estSignedBias := \hat{\tau}_{\parent} - \frac{n_{\leftchild} \hat{\tau}_{\leftchild} + n_{\rightchild} \hat{\tau}_{\rightchild}}{n_{\leftchild}+n_{\rightchild}},
\end{equation}
the \emph{signed bias}. By squaring and then multiplying by $\frac{n_{\leftchild} + n_{\rightchild}}{N}$ (a constant factor at each node), one recovers the heterogeneity criterion $\estHet$ from the signed heterogeneity $\estSignedHet$, and the bias criterion $\estBias$ from the signed bias $\estSignedBias$.

\subsection{Honest Validation of Splits}\label{Sec:validation-of-splits}
In this section, we introduce a second modification to standard RF that we propose for IntCF, complementing the new splitting criterion~\eqref{Eq:IntCF} introduced in Section~\ref{Sec:splitting-rule}. Importantly, this modification does not affect how the trees themselves are grown: trees are constructed exactly as before, simply using the combined splitting criterion~\eqref{Eq:IntCF} in place of the standard impurity decrease. Instead, this modification concerns how we later summarize, at each node, the heterogeneity and bias contribution identified during tree construction, for example when computing a mean-decrease-in-impurity-type measure of feature importance.

While building the tree, especially for nodes close to the root, the criteria used to select splits might not yet be informative: as discussed in Section~\ref{Sec:splitting-rule}, the within-node unconfoundedness assumption~\eqref{eq:averagePredCausal} need not hold until the tree has conditioned on the relevant confounders, so for nodes near the root the estimates entering the heterogeneity criterion $\estHet$~\eqref{Eq:heterogeneity} and the bias criterion $\estBias$ \eqref{Eq:bias} may still be affected by confounding that is only corrected at later splits. 

To address this problem, we use out-of-bag (or hold-out) samples to summarize the heterogeneity and bias contribution of each split. This overall idea is not new and has been used in various forms before, both for prediction, where it is known as \emph{honest prediction}, and for feature importance, e.g., in debiased or out-of-bag variants of MDI \citep{li_debiased_2019,zhou_unbiased_2021}. We build on both of these ideas. First, we also use out-of-bag or hold-out samples for our final predictions; this is simply the standard honest-prediction procedure of double-sample trees, as used in causal forest \citep{wager_estimation_2018}, and not itself a new contribution. Second, and in addition, we use these samples as validation set to re-evaluate the split statistics at every node, similar to out-of-bag MDI approaches for RF. However, simply recomputing the heterogeneity criterion $\estHet$~\eqref{Eq:heterogeneity} and the bias criterion $\estBias$ \eqref{Eq:bias} on out-of-bag samples, using the same difference-in-means estimator at each node, is not sufficient in the causal inference setting: while using held-out data addresses the classical overfitting bias that motivates honest estimation and out-of-bag MDI, it does not address confounding. In particular, the within-node unconfoundedness assumption~\eqref{eq:averagePredCausal} need not hold until the tree has conditioned on the relevant confounders, regardless of whether the underlying estimate is computed on training or on out-of-bag data. We therefore develop a modified validation procedure in the remainder of this section, which draws on the fully grown tree (or forest) to correct for this remaining confounding.

To use the validation set to also validate split attribution at all nodes, we extend the double-sample-trees procedure to compute predictions from this data set not only at leaf nodes, but also at inner nodes. To obtain both an estimate of corrected bias and of identified heterogeneity, we compute two separate estimates for each node, using the following procedure.

For the first estimate, we feed the validation samples through the considered tree and, at each node, compute an estimate as before using the difference-in-means estimator~\eqref{Eq:ate-estimate}, but now based on validation samples instead of training samples. Denote the indices of validation samples in node \( \node \) by \( \indexValidation_{\node} \); throughout this section, $n_{\node}$, $n_{\leftchild}$, $n_{\rightchild}$, and $N$ refer to the corresponding counts of validation samples, analogous to their earlier use for training samples in Section~\ref{Sec:splitting-rule}. At node \( \node \), we call this estimate
\begin{equation}\label{Eq:forwardPred}
    \hat{\tau}_{\node}^{\forward} = \frac{1}{\abs{\{i \in \indexValidation_{\node}: t_{i} = 1\}}} \sum_{\substack{i \in \indexValidation_{\node} \\ t_{i}=1}} y_{i} - \frac{1}{\abs{\{i \in \indexValidation_{\node}: t_{i} = 0\}}} \sum_{\substack{i \in \indexValidation_{\node} \\ t_{i}=0}} y_{i} 
\end{equation}
the \emph{forward prediction}. It might still be influenced by bias that is only corrected at later splits. As with double-sample trees, at leaf nodes these estimates are used as predictions for samples falling in the respective leaf.

For the second estimate, we use the final predictions \( \hat{\tau}_{i}^{\validaton} \) for the validation samples (these can be either the predictions from the single tree or from an entire RF). At each node \( \node \), a new estimate \( \hat{\tau}_{\node}^{\backward} \) for the CATE can then be computed by averaging the final predictions of validation samples contained in this node, i.e., 
\begin{equation}\label{Eq:backwardPred}
    \hat{\tau}_{\node}^{\backward} = \frac{1}{\abs{\indexValidation_{\node}}} \sum_{i \in \indexValidation_{\node}} \hat{\tau}_{i}^{\validaton}.
\end{equation}
For inner nodes, this estimate can also be computed as \( \hat{\tau}_{\parent}^{\backward} = \frac{n_{\leftchild} \hat{\tau}_{\leftchild}^{\backward} + n_{\rightchild} \hat{\tau}_{\rightchild}^{\backward}}{n_{\leftchild} + n_{\rightchild}} \). Since these estimates can therefore be computed iteratively from the leaves up to the root, we call them \emph{backward predictions}. If the final predictions are unbiased, so are the backward predictions.

One task in validating split attribution is to estimate the heterogeneity between treatment effects in the child nodes at each split. For this, we use a formula analogous to the heterogeneity splitting criterion~\eqref{Eq:heterogeneity}, but now using the backward predictions, i.e.,
\begin{equation}\label{Eq:validation-het}
  \estHetV := \frac{n_{\leftchild} \cdot n_{\rightchild}}{N (n_{\leftchild} + n_{\rightchild})} \left(\hat{\tau}_{\leftchild}^{\backward} - \hat{\tau}_{\rightchild}^{\backward}\right)^{2}.
\end{equation}
We refer to~\eqref{Eq:validation-het} as the \emph{heterogeneity validation criterion}.

For the bias, such a direct transfer is not possible: unlike the heterogeneity criterion, the backward predictions already incorporate bias corrections made throughout the tree, not only the correction attributable to the split under consideration. Instead, we can estimate the squared bias corrected by the remainder of the tree by comparing forward and backward estimates, using \( \frac{n_{\leftchild} + n_{\rightchild}}{N} \left(\hat{\tau}_{\parent}^{\forward} - \hat{\tau}_{\parent}^{\backward}\right)^{2} \). But this also includes bias that was only corrected in descendant nodes. For that reason, we instead use the decrease in squared bias,
\begin{equation}\label{Eq:validation-bias}
  \estBiasV := \frac{n_{\leftchild} + n_{\rightchild}}{N} \left(\hat{\tau}_{\parent}^{\forward} - \hat{\tau}_{\parent}^{\backward}\right)^{2} - \frac{n_{\leftchild}}{N} \left(\hat{\tau}_{\leftchild}^{\forward} - \hat{\tau}_{\leftchild}^{\backward}\right)^{2} - \frac{n_{\rightchild}}{N} \left(\hat{\tau}_{\rightchild}^{\forward} - \hat{\tau}_{\rightchild}^{\backward}\right)^{2},
\end{equation}
which we refer to as the \emph{bias validation criterion}.

In each node, one can show that sum of the heterogeneity validation criterion and the bias validation criterion equals the decrease in mean squared error obtained by replacing the forward estimate of the average treatment effect with the final individual predictions, as the following lemma shows.
\begin{lemma}\label{lem:sum_of_validation}
 For the validation criteria~\eqref{Eq:validation-het} and~\eqref{Eq:validation-bias} we have
  \[
      \estHetV + \estBiasV = \frac{1}{N} \left( 
      \sum_{i\in\indexValidation_{\parent}} \left(\hat{\tau}_{\parent}^{\forward} - \hat{\tau}_{i}^{\validaton}\right)^{2} - \sum_{i\in\indexValidation_{\leftchild}} \left(\hat{\tau}_{\leftchild}^{\forward} - \hat{\tau}_{i}^{\validaton}\right)^{2} - \sum_{i\in\indexValidation_{\rightchild}} \left(\hat{\tau}_{\rightchild}^{\forward} - \hat{\tau}_{i}^{\validaton}\right)^{2}\right).
  \]
\end{lemma}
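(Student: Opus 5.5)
The identity is purely algebraic, so the plan is a bias–variance (Pythagorean) decomposition of each within-node sum of squares about the backward prediction. Two facts will be used. First, by definition~\eqref{Eq:backwardPred}, $\hat{\tau}_{\node}^{\backward}$ is the average of $\hat{\tau}_{i}^{\validaton}$ over $i \in \indexValidation_{\node}$. Second, $\indexValidation_{\parent} = \indexValidation_{\leftchild} \sqcup \indexValidation_{\rightchild}$, which gives $\hat{\tau}_{\parent}^{\backward} = \frac{n_{\leftchild} \hat{\tau}_{\leftchild}^{\backward} + n_{\rightchild} \hat{\tau}_{\rightchild}^{\backward}}{n_{\leftchild} + n_{\rightchild}}$. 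The forward predictions are just constants attached to each node, and no property of them is needed.

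\emph{Step 1.} For each node $\node \in \{\parent, \leftchild, \rightchild\}$, write $\hat{\tau}_{\node}^{\forward} - \hat{\tau}_{i}^{\validaton} = (\hat{\tau}_{\node}^{\forward} - \hat{\tau}_{\node}^{\backward}) + (\hat{\tau}_{\node}^{\backward} - \hat{\tau}_{i}^{\validaton})$ and expand the square. Because $\hat{\tau}_{\node}^{\backward}$ is the mean of the $\hat{\tau}_{i}^{\validaton}$ in the node, the cross term vanishes when summed over $\indexValidation_{\node}$. This gives
\[
\sum_{i\in\indexValidation_{\node}} \left(\hat{\tau}_{\node}^{\forward} - \hat{\tau}_{i}^{\validaton}\right)^{2} = n_{\node}\left(\hat{\tau}_{\node}^{\forward} - \hat{\tau}_{\node}^{\backward}\right)^{2} + \sum_{i\in\indexValidation_{\node}} \left(\hat{\tau}_{\node}^{\backward} - \hat{\tau}_{i}^{\validaton}\right)^{2},
\]
where $n_{\parent} = n_{\leftchild} + n_{\rightchild}$.

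\emph{Step 2.} Substitute this into the right-hand side of the lemma, dividing by $N$. The three terms $\frac{n_{\node}}{N}(\hat{\tau}_{\node}^{\forward} - \hat{\tau}_{\node}^{\backward})^{2}$, with signs $+,-,-$, reproduce exactly $\estBiasV$ from~\eqref{Eq:validation-bias}. The remainder is
\[
\frac{1}{N}\Big(\sum_{\indexValidation_{\parent}} (\hat{\tau}_{\parent}^{\backward} - \hat{\tau}_{i}^{\validaton})^{2} - \sum_{\indexValidation_{\leftchild}} (\hat{\tau}_{\leftchild}^{\backward} - \hat{\tau}_{i}^{\validaton})^{2} - \sum_{\indexValidation_{\rightchild}} (\hat{\tau}_{\rightchild}^{\backward} - \hat{\tau}_{i}^{\validaton})^{2}\Big).
\]

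\emph{Step 3.} Identify the remainder with $\estHetV$. It has exactly the form of the regression impurity decrease~\eqref{eq:mseRegression}, with responses $y_i$ replaced by $\hat{\tau}_{i}^{\validaton}$ and node averages replaced by the backward predictions. Those averages satisfy the analogues of~\eqref{eq:averagePredRegression} and~\eqref{eq:meanPredRegression}. The same algebra that takes~\eqref{eq:mseRegression} to~\eqref{eq:mseRegression3} therefore applies verbatim, equivalently Lemma~\ref{lem:mse_to_squared_change} followed by Lemma~\ref{lem:squared_change_to_het_criterion}, with $\tau_i$ replaced by $\hat{\tau}_{i}^{\validaton}$. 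This turns the remainder into $\frac{n_{\leftchild} n_{\rightchild}}{N(n_{\leftchild}+n_{\rightchild})}(\hat{\tau}_{\leftchild}^{\backward} - \hat{\tau}_{\rightchild}^{\backward})^{2} = \estHetV$.

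The only point needing care is Step 3. One must check that the hypotheses~\eqref{eq:averagePredCausal} and~\eqref{eq:meanPredCausal} hold exactly here, and they do, because the backward predictions are genuine averages over a disjoint partition. One must also keep the counts as validation counts throughout. Otherwise the argument is routine.
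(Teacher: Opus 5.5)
Your proof is correct and follows the paper's argument. The paper likewise first proves the per-node identity $\sum_{i\in\indexValidation_{\node}}(\hat{\tau}_{\node}^{\forward}-\hat{\tau}_{i}^{\validaton})^{2}-\sum_{i\in\indexValidation_{\node}}(\hat{\tau}_{\node}^{\backward}-\hat{\tau}_{i}^{\validaton})^{2}=n_{\node}(\hat{\tau}_{\node}^{\forward}-\hat{\tau}_{\node}^{\backward})^{2}$, then shows that the backward-centered impurity decrease equals $\frac{n_{\leftchild}n_{\rightchild}}{n_{\leftchild}+n_{\rightchild}}(\hat{\tau}_{\leftchild}^{\backward}-\hat{\tau}_{\rightchild}^{\backward})^{2}$, and combines the two; the only difference is that the paper checks this second identity by direct expansion, whereas you reuse Lemmas~\ref{lem:mse_to_squared_change} and~\ref{lem:squared_change_to_het_criterion} with $\tau_i \to \hat{\tau}_{i}^{\validaton}$ and $\hat{\tau}_{\node} \to \hat{\tau}_{\node}^{\backward}$, which is legitimate because the backward predictions satisfy both hypotheses exactly.
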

The proof is given in the appendix. Again, note that the decomposition in Lemma~\ref{lem:sum_of_validation} is analog to a classical bias-variance decomposition, similar to~\eqref{Eq:biasVarianceDecomp} for the splitting criteria.

\subsection{Summary of Algorithm}
We now combine the two modifications introduced above---the amended splitting criterion $\Delta^{\mathrm{IntCF}}$~\eqref{Eq:IntCF} from Section~\ref{Sec:splitting-rule} and the honest validation procedure from Section~\ref{Sec:validation-of-splits}---into a complete algorithm for growing decision trees and RFs. Analogous to how a RF is built from individual decision trees (Section~\ref{Sec:intro-rf}), we first describe how to grow a single tree, which we call the \emph{interpretable causal tree (IntCT)}; the corresponding forest, \emph{IntCF}, is then obtained by combining many such trees, as described below.

\subsubsection{Interpretable Causal Tree (IntCT)}

\begin{enumerate}
  \item Split the samples into a training set \( \indexTraining \) and a validation set \( \indexValidation \).
  \item Build the tree based on the training samples \( \indexTraining \):
  \begin{enumerate}
    \item At the current node, consider all possible splits, determine the resulting sample sets for the two child nodes, and compute the treatment-effect estimates $\hat{\tau}_{\leftchild}, \hat{\tau}_{\rightchild}$ for these child nodes using the difference-in-means estimator~\eqref{Eq:ate-estimate}.
    \item For each possible split, compute the heterogeneity criterion $\estHet$~\eqref{Eq:heterogeneity} and the bias criterion $\estBias$~\eqref{Eq:bias}.
    \item Select the split that maximizes $\Delta^{\mathrm{IntCF}} = \max\left(\estHet, \estBias\right)$~\eqref{Eq:IntCF}, i.e., the split with the largest value of either criterion.
    \item Create the new child nodes and assign them their respective training samples.
    \item If no stopping criterion (e.g., maximum depth or minimum leaf size) has been reached, repeat the above steps at each child node.
  \end{enumerate}
  \item Validate the splits and compute predictions using the validation samples (see Section~\ref{Sec:validation-of-splits}); this step retraces the tree built in Step~2, but now operates on $\indexValidation$ instead of $\indexTraining$:
  \begin{enumerate}
    \item Compute the forward predictions $\hat{\tau}_{\node}^{\forward}$~\eqref{Eq:forwardPred} for all nodes of the tree.
    \item At the leaves, use the forward prediction as the tree's prediction for samples that fall into the respective leaf; this is the standard ``honest prediction'' procedure, as proposed, e.g., in \citep{wager_estimation_2018}.
    \item Use the tree (or the forest) to compute estimated treatment effects $\hat{\tau}_{i}^{\validaton}$ for all validation samples.
    \item Use these estimated treatment effects to compute the backward predictions $\hat{\tau}_{\node}^{\backward}$ according to~\eqref{Eq:backwardPred} for all nodes of the tree.
    \item Compute the heterogeneity validation criterion $\estHetV$~\eqref{Eq:validation-het} and the bias validation criterion $\estBiasV$~\eqref{Eq:validation-bias} from the forward and backward predictions, and store their values at each node.
  \end{enumerate}
  \item The final result is the tree together with its leaf-level predictions and the validation criterion values $\estHetV, \estBiasV$ stored at each split.
\end{enumerate}

\subsubsection{Interpretable Causal Forest (IntCF)}

In a forest, the same algorithm is used to grow each tree, with the training/validation split of Step~1 performed independently for each tree, so that every sample is used, either in the training or in the validation role, in every tree of the forest. This plays a role analogous to the bootstrap sampling used in standard RFs, where the samples not drawn in the bootstrap, the out-of-bag samples, serve as the validation set. As with standard RFs, only a random subset of features is considered at each split when growing each tree.

\subsubsection{Feature Importance}

Following the original mean-decrease-in-impurity approach to feature importance \citep{breiman_classification_1984}, we sum, over all nodes that split on a given feature, a measure of that split's contribution to obtain the feature's importance. Instead of the impurity decrease, we use a validation criterion for this purpose: by default, only the heterogeneity validation criterion $\estHetV$~\eqref{Eq:validation-het} is used, but the bias validation criterion $\estBiasV$~\eqref{Eq:validation-bias}, or the sum of both, can also be used if desired.

In many implementations, for example scikit-learn \citep{scikit-learn}, it is customary to scale the importance measure so that it sums to 1. In contrast to this standard MDI procedure, the bias validation criterion can take negative values, so feature importance scores that include it may also be negative. We therefore scale our feature importance scores so that the sum of the non-negative scores is 1.

\section{Population-level Analysis of the Splitting Criteria under a Change Point Model}\label{Sec:theory}

In this section, we provide theoretical support for the central claim of this paper: that the combined splitting criterion introduced in Section~\ref{Sec:splitting-rule} correctly distinguishes splits driven by confounding-induced bias from splits that reflect genuine heterogeneity in the treatment effect. Rather than finite-sample guarantees, we study this question at the population level, i.e., in the limit as the sample size $N \to \infty$, under a stylized \emph{change point model} in which the treatment effect, the propensity, and the main outcome are step functions of a single covariate.

The remainder of this section is organized as follows. Section~\ref{Sec:pop-level-analysis} derives the population limits of the signed heterogeneity criterion $\estSignedHet$ and the signed bias criterion $\estSignedBias$. Section~\ref{Sec:marginalization} explains why, due to the marginalization inherent to decision-tree splits, it suffices to study a univariate change point model, which we introduce formally. Section~\ref{Sec:decomposition} then shows that any such change point model decomposes into a component with a constant treatment effect and a component for which the difference-in-means estimator is unbiased. Finally, Section~\ref{Sec:identifying-het-bias} shows that our two splitting criteria correctly recover this decomposition: the population bias criterion attains an extremum at the true change point exactly when bias is present, and the population heterogeneity criterion attains an extremum there exactly when the treatment effect is genuinely heterogeneous. This provides theoretical justification for the algorithm's ability to separate these two sources of improvement, which underlies its interpretability.

\subsection{Population-level Analysis}\label{Sec:pop-level-analysis}

Rather than deriving finite-sample results, we focus on the corresponding population analogs of~\eqref{Eq:signed-het} and~\eqref{Eq:signed-bias} in order to elucidate general structural properties of these splitting criteria.
To this end, consider a decision tree with an inner node $\node$ and the associated hyper-rectangle $R(\node)$.
At node $\node$, we examine a candidate split along a given variable, say $x_1$, at threshold $k$.
Throughout the analysis, we impose the following simplifying assumptions:
\begin{enumerate}
    \item[A1] The feature vector $X$ is uniformly distributed on the unit hypercube, that is, $X \sim U([0,1]^d)$.
    \item[A2] The outcomes $Y^{T = 1}, Y^{T = 0}$ are bounded.
    \item[A3] The positivity assumption holds, meaning that there exists some $\epsilon > 0$ such that, for all $x \in [0,1]^d$, 
    \[
        \epsilon \leq P(T = 1 \mid X = x) \leq 1 - \epsilon .
    \]
    \item[A4] The hyper-rectangle $R(\node)$, with positive volume $\mu(R(\node)) > 0$, the choice of the splitting variable (without loss of generality assumed to be $x_1$), as well as the considered threshold $k$ are independent of the training data $D = \{(y_1,\features_1,t_1),\dotsc,(y_N,\features_N,t_N)\}$.
\end{enumerate}

Note that, since decision trees are invariant under monotone transformations of the covariates, the first assumption is essentially equivalent to assuming independence among the different features.
Assumptions~A1 and~A2 are standard in the analysis of decision-tree-based methods \citep[see, for example,][]{behr_provable_2022,wager_estimation_2018}. Assumption~A3 is a requirement for causal inference in general \citep[compare, for example,][]{rosenbaum_propensity_1983}.
The final assumption is clearly violated for our intCT algorithm, since all split points in the tree are selected in a data-dependent manner, implying that the resulting hyper-rectangle $R(\node)$ likewise depends on the training data $D$.
Nevertheless, extending the subsequent results to this setting appears to be relatively straightforward and mainly of a technical nature; see Remark~\ref{rem:uniform} for details.

Under Assumptions~A1--A4, it follows directly from the law of large numbers and the continuous mapping theorem that, when the training data $D$ are generated i.i.d.\ according to the data-generating process $P(Y^{T=0}, Y^{T=1}, X, T)$, for any fixed threshold \( k \)
\begin{equation}\label{eq:convsHsB}
     \estSignedHet \pto \signedHet(k),
     \qquad
     \estSignedBias \pto \signedBias(k),
     \qquad \text{as } N \to \infty .
\end{equation}
The corresponding population quantities are given by
\begin{equation}\label{eq:popsH}
  \signedHet(k) := \sqrt{k (1-k)} \cdot \left[
  \bigl(
  \bar{Y}^{T=1}_{\leftchild} - \bar{Y}^{T=0}_{\leftchild}
  \bigr) 
  -
  \bigl(
  \bar{Y}^{T=1}_{\rightchild} - \bar{Y}^{T=0}_{\rightchild}
  \bigr) \right],
\end{equation}
and
\begin{equation}\label{eq:popsB}
    \signedBias(k) :=
    \bigl(\bar{Y}^{T=1}
    -
    \bar{Y}^{T=0}
    \bigr)
    -
    k \cdot
    \bigl(
    \bar{Y}^{T=1}_{\leftchild}
    -
    \bar{Y}^{T=0}_{\leftchild}
    \bigr)
    -
    (1-k) \cdot
    \bigl(\bar{Y}^{T=1}_{\rightchild}
    - \bar{Y}^{T=0}_{\rightchild}
    \bigr)
\end{equation}
where, for \( a \in \{0, 1\} \),
\begin{align*}
  \bar{Y}^{T=a} &:= \expectation{Y^{T=a} \mid X \in R(\node),\, T=a},\\
  \bar{Y}^{T=a}_{\leftchild} &:= \expectation{Y^{T=a} \mid X_1 \leq k,\, X \in R(\node),\, T=a},\\
  \bar{Y}^{T=a}_{\rightchild} &:= \expectation{Y^{T=a} \mid X_1 > k,\, X \in R(\node),\, T=a}
\end{align*}
are expectations of observed outcomes in the whole node as well as to left and right of the considered split, respectively.
In the following, we study the properties of the population counterparts $\signedHet(k)$ and $\signedBias(k)$ associated with our proposed splitting criteria.

\begin{remark}\label{rem:uniform}
    To dispense with Assumption~4, it would suffice to establish the convergence result in~\eqref{eq:convsHsB} uniformly over all hyper-rectangles $R(\node)$ with a fixed minimal volume.
More precisely, one would require a result of the form that, for any $\epsilon > 0$,
\[
    \probability{\sup_{R,k,j\in \{1,\dotsc,d\},\, \mu(R) > \delta}
    \bigl| \estSignedHet - \signedHet(k) \bigr|
    > \epsilon} \;\to\; 0
    \quad \text{as } N \to \infty,
\]
and an analogous statement for the bias term.
The assumption of a minimal volume can be enforced in practice by imposing a maximal tree depth and by requiring splitting thresholds $k$ to be bounded away from zero and one; both restrictions are standard in the analysis of RFs \citep[see, e.g.,][]{scornet_theory_2026}.
Establishing such uniform convergence results should be feasible using standard tools from empirical process theory, including uniform convergence arguments and concentration inequalities \citep[see, e.g.,][]{behr_provable_2022}.
Since this extension is largely technical and does not affect the qualitative insights of our analysis, we do not pursue it further here and instead focus on the population quantities $\signedHet(k)$ and $\signedBias(k)$ in~\eqref{eq:popsH} and~\eqref{eq:popsB} directly.
\end{remark}

\subsection{Marginalization Effects}\label{Sec:marginalization}

By construction, when a decision tree considers a split along a given variable $X_j$, say $X_1$, the influence of the remaining variables $X_2,\dotsc,X_d$ is marginalized out through averaging, as reflected in the conditional expectations in \eqref{eq:popsH} and \eqref{eq:popsB}. Consequently, once a specific variable $X_1$ is selected, decision trees can capture only the marginal effect of this variable. This is an intrinsic property of decision trees and gives rise to well-known limitations of such methods.

This marginalization is visible directly in the population criteria themselves: $\signedHet(k)$ and $\signedBias(k)$ in \eqref{eq:popsH} and \eqref{eq:popsB} depend on the joint distribution of $(\randomfeatures,T,Y^{T=0},Y^{T=1})$ only through $X_1$ and the conditioning event $\randomfeatures \in R(t)$, which is fixed across both candidate children. It therefore suffices, for the purpose of studying the behavior of a single candidate split, to consider a data-generating process in which only one covariate, say $x_1$, carries any signal. Here, we study a univariate \emph{change point model} for CATE, propensity, and main outcome, in analogy to the general definitions of $\tau, p, \mu$ in \eqref{Eq:tau},~\eqref{Eq:propensity} and~\eqref{Eq:main_effect}:
\begin{equation}\label{eq:change_point_model}
\begin{aligned}
    \tau(x_1) &:= \beta_0 + \beta_1 \indicator{x_1 \leq \gamma}, \\
    p(x_1) &:= q_0 + q_1 \indicator{x_1 \leq \gamma},\\
    \mu(x_1) &:= \alpha_0 + \alpha_1 \indicator{x_1 \leq \gamma}. 
\end{aligned}
\end{equation}

Each coefficient governs a distinct aspect of the model. The slope $\beta_1$ governs treatment effect heterogeneity: $\beta_1 = 0$ corresponds to a constant treatment effect $\tau(x_1)=\beta_0$, i.e., a model without treatment effect heterogeneity. Analogously, $q_1$ governs the degree of confounding through $x_1$: $q_1=0$ corresponds to a constant propensity $p(x_1)=q_0$, i.e., a setting in which treatment assignment does not depend on $x_1$, as in a randomized trial. Finally, $\alpha_1$ governs whether $x_1$ has a direct, prognostic effect on the main outcome: $\alpha_1=0$ corresponds to a constant main outcome $\mu(x_1)=\alpha_0$. Intuitively, confounding bias for the difference-in-means estimator arises when $x_1$ affects both treatment assignment ($q_1\neq0$) and the potential outcomes; Theorem~\ref{lem:decomposition} below makes this precise.

By construction, this model has a unique natural split point at $\gamma$: splitting exactly there separates $\tau$, $p$, and $\mu$ into constant pieces on either side, which is what allows the resulting heterogeneity and bias to be exactly quantified.

With $\mu$ and $\tau$ specified by the change point model above---now understood as depending on $\features$ only through its first coordinate $x_1$---the potential outcomes are defined as
\begin{equation}\label{Eq:outcomes_from_model}
  Y^{T=0}(\features) = \mu(\features) - \frac{1}{2} \tau(\features) + \noise_0 \text{ and } Y^{T=1}(\features) = \mu(\features) + \frac{1}{2} \tau(\features) + \noise_1
\end{equation}
with mean-zero noise terms $\noise_0, \noise_1$ that are assumed to be independent of one another and of $\randomfeatures$.

\subsection{Bias--Heterogeneity Decomposition of the Change Point Model}\label{Sec:decomposition}

In the following, we provide theoretical insight into why the population splitting criteria $\signedHet(k)$ and $\signedBias(k)$ in~\eqref{eq:popsH} and~\eqref{eq:popsB} are able to distinguish between heterogeneity in the treatment effect and bias induced by confounding.
To this end, we first observe that any change point model of the form~\eqref{eq:change_point_model} can be decomposed into the sum of two components: one corresponding to a model with a constant treatment effect, and another for which the difference-in-means estimator is unbiased.
This decomposition is formalized in the following lemma.
More precisely, consider a potential outcome model $(Y^{T = 0}, Y^{T = 1}, X, T)$ with treatment effect $\tau(x)$, main outcome $\mu(x)$, and propensity $p(x)$ as defined in Section~\ref{Sec:data_model}.
We say that the \emph{difference-in-means estimator is unbiased} if
\begin{equation}\label{eq:unbiased}
    \expectation{\tau(X)} = \expectation{Y^{T = 1} \mid T = 1} - \expectation{Y^{T = 0} \mid T = 0},
\end{equation}
or, equivalently (recall Equation~\ref{Eq:outcomes_from_model}), if
\[
    \expectation{\tau(X)} =
    \expectation{\mu(X) + \tfrac{1}{2}\tau(X) \mid T = 1 }
    - \expectation{\mu(X) - \tfrac{1}{2}\tau(X) \mid T = 0 }.
\]
\begin{theorem}\label{lem:decomposition}
Consider a potential outcome model $(Y^{T = 0}, Y^{T = 1}, X, T)$ with a single uniformly distributed covariate \( X \sim U([a, b]), a < b \in \R \), potential outcomes as in~\eqref{Eq:outcomes_from_model}, and main outcome, treatment effect, and propensity following a change point model as in~\eqref{eq:change_point_model}, i.e.,
\[
\mu(x) = \alpha_{0} + \alpha_{1} \cdot \indicator{x \leq \gamma},\quad
\tau(x) = \beta_{0} + \beta_{1} \cdot \indicator{x \leq \gamma},\quad
p(x) = q_{0} + q_{1} \cdot \indicator{x \leq \gamma}.
\]
Assume that Assumption~A3 holds. Then $\mu(x)$ and $\tau(x)$ can be decomposed as
\[
\mu(x) = \mu^B(x) + \mu^H(x)
\quad \text{and} \quad
\tau(x) = \tau^B(x) + \tau^H(x),
\]
such that, for suitable constants $\alpha_{0}^\prime, \alpha_{1}^\prime, \alpha_{0}^{\prime\prime}, \alpha_{1}^{\prime\prime}, \beta_{0}^\prime, \beta_{0}^{\prime\prime}, \beta_{1}^{\prime\prime}$, the following holds:
\begin{enumerate}
    \item \textbf{(Change point model with constant treatment effect)}\\
    $\mu^B(x) = \alpha_{0}^\prime + \alpha_{1}^\prime \cdot \indicator{x \leq \gamma}$ and
    $\tau^B(x) = \beta_{0}^\prime$.
    
    \item \textbf{(Change point model with unbiased difference-in-means)}\\
    $\mu^H(x) = \alpha_{0}^{\prime\prime} + \alpha_{1}^{\prime\prime} \cdot \indicator{x \leq \gamma}$ and
    $\tau^H(x) = \beta_{0}^{\prime\prime} + \beta_{1}^{\prime\prime} \cdot \indicator{x \leq \gamma}$,
    where
    \begin{equation}\label{eq:noBias}
        \expectation{\tau^H(X)} =
        \expectation{\mu^H(X) + \tfrac{1}{2}\tau^H(X) \mid T = 1 }
        -
        \expectation{\mu^H(X) - \tfrac{1}{2}\tau^H(X) \mid T = 0 }.
    \end{equation}
\end{enumerate}
\end{theorem}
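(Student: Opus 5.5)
The plan is to use that the ``bias'' of the difference-in-means estimator is a \emph{linear} functional of the pair $(\mu,\tau)$. Concretely, I would define, for bounded functions $m,t$ on $[a,b]$,
\[
\mathcal{B}(m,t) := \expectation{m(X) + \tfrac12 t(X) \mid T=1} - \expectation{m(X) - \tfrac12 t(X) \mid T=0} - \expectation{t(X)}.
\]
These conditional expectations are well defined, because Assumption~A3 gives $P(T=1)\ge\epsilon>0$ and $P(T=0)\ge\epsilon>0$. Then \eqref{eq:noBias} reads $\mathcal{B}(\mu^H,\tau^H)=0$, and $\mathcal{B}$ is bilinear-free, i.e.\ $\mathcal{B}(m+m',t+t')=\mathcal{B}(m,t)+\mathcal{B}(m',t')$. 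Since $\mathcal{B}(c,0)=0$ and $\mathcal{B}(0,c)=0$ for constants $c$, the only nontrivial contributions come from the indicator $g(x):=\indicator{x\le\gamma}$.

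I would put all heterogeneity into the $H$-component: $\tau^H:=\tau$ (so $\beta_0''=\beta_0$, $\beta_1''=\beta_1$) and $\tau^B:=0$ (so $\beta_0'=0$). It then remains to choose $\mu^H=\alpha_0''+\alpha_1'' g$ with $\mathcal{B}(\mu^H,\tau)=0$ and to set $\mu^B:=\mu-\mu^H$. This $\mu^B$ is automatically of the required step form, with $\alpha_0'=\alpha_0-\alpha_0''$ and $\alpha_1'=\alpha_1-\alpha_1''$. By linearity the condition becomes
\[
\alpha_1''\,\mathcal{B}(g,0) + \beta_1\,\mathcal{B}(0,g) = 0 ,
\]
where $\mathcal{B}(g,0)=P(X\le\gamma\mid T=1)-P(X\le\gamma\mid T=0)$.

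The main step is the case analysis on whether $\mathcal{B}(g,0)\neq 0$. First, if $\gamma\notin(a,b)$, then $g$ is a.s.\ constant, so everything is constant and $\mu^H:=0$ works trivially. Otherwise let $\pi:=P(X\le\gamma)\in(0,1)$. Bayes' rule gives
\[
P(X\le\gamma\mid T=1)=\frac{\pi(q_0+q_1)}{\pi(q_0+q_1)+(1-\pi)q_0}.
\]
A short computation then shows that $\mathcal{B}(g,0)=0$ if and only if $q_1=0$ (note $q_0,q_0+q_1\in[\epsilon,1-\epsilon]$).
\begin{itemize}
\item If $q_1\neq0$, I take $\alpha_1'':=-\beta_1\mathcal{B}(0,g)/\mathcal{B}(g,0)$ and, say, $\alpha_0'':=0$.
\item If $q_1=0$, then $T$ is independent of $X$. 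Hence $\mathcal{B}(0,t)=\tfrac12\expectation{t}+\tfrac12\expectation{t}-\expectation{t}=0$ for every $t$, and also $\mathcal{B}(g,0)=0$, so any $\alpha_1''$ (e.g.\ $0$) works.
\end{itemize}

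The step I expect to need the most care is verifying the equivalence $\mathcal{B}(g,0)=0\iff q_1=0$. This is where positivity (A3) and the nondegeneracy $\gamma\in(a,b)$ are used. The rest is bookkeeping of constants.
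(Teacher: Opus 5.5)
Your proof is correct, but it reaches the decomposition by a different mechanism than the paper. Both arguments keep the jump $\beta_1$ in $\tau^H$ and tune the jump $\alpha_1''$ of $\mu^H$ so that the $H$-part is unbiased, pushing everything else into the $B$-part. The placement of the constants differs: the paper puts $\alpha_0,\beta_0$ into the $B$-part, while you put $\beta_0$ into $\tau^H$. By the non-uniqueness remark this is immaterial.

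The paper first computes the bias in closed form (Lemma~\ref{lem:bias_of_prediction}), namely $\gamma(1-\gamma)q_1\bigl(\alpha_1+(1/2-q_0-\gamma q_1)\beta_1\bigr)/\bigl((q_0+\gamma q_1)(1-q_0-\gamma q_1)\bigr)$. It then reads off $\alpha_1''=-(1/2-q_0-\gamma q_1)\beta_1$. Because of the overall factor $q_1$, this one formula works uniformly, with no case distinction.

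You instead use only linearity of $\mathcal{B}$, reduce to one scalar equation, and need only the nondegeneracy fact $\mathcal{B}(g,0)\neq0\iff q_1\neq0$. This follows from $\mathcal{B}(g,0)=\pi(1-\pi)q_1/\bigl(\bar p(1-\bar p)\bigr)$ with $\bar p:=q_0+\pi q_1=P(T=1)\in[\epsilon,1-\epsilon]$. Alternatively, equality of $P(X\le\gamma\mid T=1)$ and $P(X\le\gamma\mid T=0)$ means $T$ is independent of $\{X\le\gamma\}$, i.e.\ $q_0+q_1=q_0$.

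Your route is more conceptual. It works directly on $[a,b]$ and treats the degenerate case $\gamma\notin(a,b)$ explicitly, rather than via rescaling and the closed-form formula. It also transfers to any model where the bias is linear in $(\mu,\tau)$ and some admissible direction for $\mu^H$ has nonzero bias coefficient.

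The cost is that $\alpha_1''$ is only implicit. Evaluating your ratio gives $\mathcal{B}(0,g)/\mathcal{B}(g,0)=1/2-q_0-\pi q_1$, exactly the paper's value. The paper reuses this explicit constant, together with the closed-form bias, in the proof of Theorem~\ref{theo:decompMax}.

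A small wording fix: ``bilinear-free'' should simply read ``linear''.
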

The proof is given in the appendix.
\begin{remark}
  Note that this decomposition is not unique. The condition~\eqref{eq:noBias} does not restrict \( \alpha_{0}^{\prime} \) and \( \alpha_{0}^{\prime\prime} \) as well as \( \beta_{0}^{\prime} \) and \( \beta_{0}^{\prime\prime} \), so that \( \alpha_{0} \) and \( \beta_{0} \) can be freely distributed onto the respective pair of parameters. If the treatment assignment is randomized, i.e., \( p(x) \) constant in \( [0, 1] \), any parameters will fulfill~\eqref{eq:noBias}, so the same freedom of choice also extends to \( \alpha_1 \).
\end{remark}

Intuitively, this decomposition isolates exactly the two reasons a split can improve the estimated treatment effect (cf.\ Section~\ref{Sec:marginalization}): the bias component $(\mu^B,\tau^B)$ has, by construction, a constant treatment effect, so any change in the estimated treatment effect attributable to this component must be due to bias correction rather than genuine heterogeneity. Conversely, the heterogeneity component $(\mu^H,\tau^H)$ satisfies the unbiasedness condition~\eqref{eq:noBias} by construction, so any change in its estimated treatment effect must reflect genuine heterogeneity rather than bias correction. We make this precise in Section~\ref{Sec:identifying-het-bias} below.
Figure~\ref{Fig:model_decomposition} illustrates such a decomposition as described in Theorem~\ref{lem:decomposition}.

\begin{figure}[tb]
  \centering
  \includegraphics[height=0.8\textwidth]{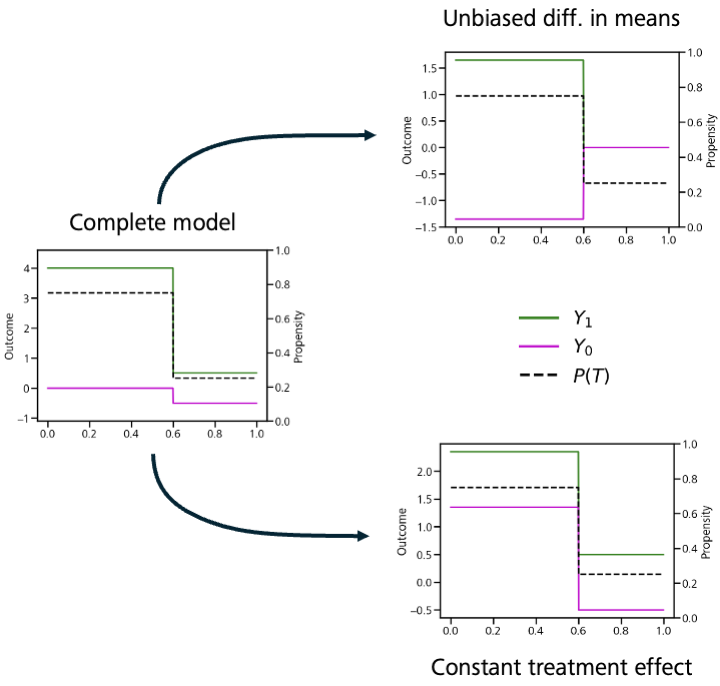}
  \caption{Illustration of the decomposition of a change point model into a component with constant treatment effect and a component with unbiased difference-in-means, as in Theorem~\ref{lem:decomposition}. The original model was defined by \( \mu(x)=0+2\cdot\indicator{x\leq0.6}, \tau(x)=1+3\cdot\indicator{x\leq0.6} \) and \( p(x)=0.25+0.5\cdot\indicator{x\leq0.6} \).}
  \label{Fig:model_decomposition}
\end{figure}

\subsection{Identifying Heterogeneity and Bias with Signed Splitting Criteria}\label{Sec:identifying-het-bias}

By Theorem~\ref{lem:decomposition} and the linearity of expectation, it follows directly that the population splitting criteria $\signedHet(k)$ and $\signedBias(k)$ in~\eqref{eq:popsH} and~\eqref{eq:popsB} can be decomposed accordingly.
To make this explicit, we first rewrite $\signedHet(k)$ and $\signedBias(k)$ in terms of the treatment effect function $\tau(x)$ and the main outcome function $\mu(x)$ as follows:

\begin{equation}\label{eq:sHvar}
\begin{aligned}
    \signedHet(k)
    = \sqrt{k (1-k)} \cdot \Bigl[&\bigl(
    \expectation{\mu(X) + \tfrac{1}{2} \tau(X) \mid X_1 \leq k,\, X \in R(\node),\, T=1}\\
    &\phantom{\bigl(}-
    \expectation{\mu(X) - \tfrac{1}{2} \tau(X) \mid X_1 \leq k,\, X \in R(\node),\, T=0}
    \bigr) \\
    -
    &\bigl(
    \expectation{\mu(X) + \tfrac{1}{2} \tau(X) \mid X_1 > k,\, X \in R(\node),\, T=1}\\
    &\phantom{\bigl(}-
    \expectation{\mu(X) - \tfrac{1}{2} \tau(X) \mid X_1 > k,\, X \in R(\node),\, T=0}
    \bigr)\Bigr] .
\end{aligned}
\end{equation}
and
\begin{equation}\label{eq:sBvar}
\begin{aligned}
    \signedBias(k)
    &= 
    \expectation{\mu(X) + \tfrac{1}{2} \tau(X) \mid X \in R(\node),\, T=1}\\
    &\phantom{=}-
    \expectation{\mu(X) - \tfrac{1}{2} \tau(X) \mid X \in R(\node),\, T=0} \\
    &-
    k \cdot
    \Bigl[
    \expectation{\mu(X) + \tfrac{1}{2} \tau(X) \mid X_1 \leq k,\, X \in R(\node),\, T=1}\\
    &\phantom{-k\cdot\Bigl[}-\expectation{\mu(X) - \tfrac{1}{2} \tau(X) \mid X_1 \leq k,\, X \in R(\node),\, T=0}
    \Bigr] \\
    &-
    (1-k) \cdot
    \Bigl[
    \expectation{\mu(X) + \tfrac{1}{2} \tau(X) \mid X_1 > k,\, X \in R(\node),\, T=1}\\
    &\phantom{-(1-k)\cdot\Bigl[}-
    \expectation{\mu(X) - \tfrac{1}{2} \tau(X) \mid X_1 > k,\, X \in R(\node),\, T=0}
    \Bigr] .
\end{aligned}
\end{equation}

Consequently, under the assumptions of Theorem~\ref{lem:decomposition}, we obtain the decomposition
\begin{equation}\label{eq:decomposeHetBias}
    \signedHet(k) = \signedHet_B(k) + \signedHet_H(k)
    \quad \text{and} \quad
    \signedBias(k) = \signedBias_B(k) + \signedBias_H(k),
\end{equation}
where $\signedHet_B(k)$ is defined analogously to~\eqref{eq:sHvar}, with $\mu$ and $\tau$ replaced by $\mu^B$ and $\tau^B$ as in Theorem~\ref{lem:decomposition}, and where $\signedHet_H(k)$, $\signedBias_B(k)$, and $\signedBias_H(k)$ are defined in complete analogy.
Note that, while there is some freedom in the choice of decomposition in Theorem~\ref{lem:decomposition}, any such decomposition results in the same values for $\signedHet_B(k)$, $\signedHet_H(k)$, $\signedBias_B(k)$, and $\signedBias_H(k)$.
An illustrative example of this decomposition for both the bias and heterogeneity splitting criteria is shown in Figure~\ref{Fig:criteria_decomposition}.

\begin{figure}[tb]
  \centering
  \includegraphics[height=0.8\textwidth]{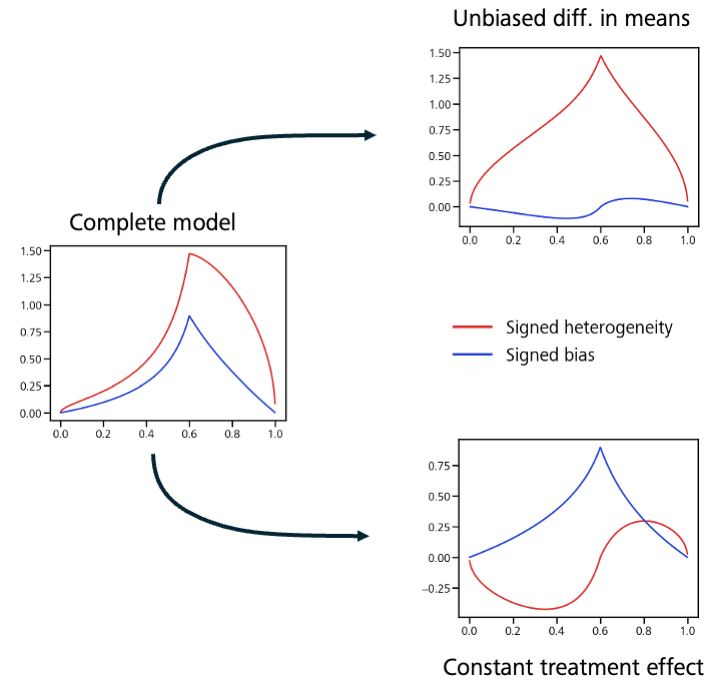}
   \caption{The signed heterogeneity and bias criteria, $\signedHet(k)$ and $\signedBias(k)$, together with their decomposition into $\signedHet_B(k)+\signedHet_H(k)$ and $\signedBias_B(k)+\signedBias_H(k)$ as in~\eqref{eq:decomposeHetBias}, for the model shown in Figure~\ref{Fig:model_decomposition}.}
  \label{Fig:criteria_decomposition}
\end{figure}

\begin{theorem}\label{theo:decompMax}
Consider a potential outcome model $(Y^{T = 0}, Y^{T = 1}, X, T)$ with a single uniformly distributed covariate \( X \sim U([a, b]), a<b \in \R \), potential outcomes as in~\eqref{Eq:outcomes_from_model}, and main outcome, treatment effect, and propensity following a change point model as in~\eqref{eq:change_point_model}, i.e.,
\[
\mu(x) = \alpha_{0} + \alpha_{1} \cdot \indicator{x \leq \gamma},\quad
\tau(x) = \beta_{0} + \beta_{1} \cdot \indicator{x \leq \gamma},\quad
p(x) = q_{0} + q_{1} \cdot \indicator{x \leq \gamma}.
\]
Fix a decomposition as in Theorem~\ref{lem:decomposition}, and define $\signedHet_B(k)$, $\signedHet_H(k)$, $\signedBias_B(k)$, and $\signedBias_H(k)$ as in~\eqref{eq:decomposeHetBias}.
Then, under Assumption~A3, the following statements hold:
\begin{enumerate}
    \item If the difference-in-means estimate is biased as in~\eqref{eq:unbiased}, i.e.,
    \[
        \expectation{\tau(X)} \neq \expectation{Y^{T = 1} \mid T = 1} - \expectation{Y^{T = 0} \mid T = 0},
    \]
    then $\signedBias_B(k)$ attains a global extremum at $k = \gamma$ and \( \signedHet_B(\gamma) = 0 \).
    \item If the treatment effect $\tau(x)$ is non-constant, then $\signedHet_H(k)$ attains a local extremum at $k = \gamma$ and \( \signedBias_H(\gamma) = 0 \).
\end{enumerate}
\end{theorem}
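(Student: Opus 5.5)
Throughout, I take the node to be the whole support, $R(\node)=[a,b]$. After an affine change of variables I assume $[a,b]=[0,1]$ and $\gamma\in(0,1)$, so that $k$ and $1-k$ are the child masses. The basic tool is the following. For any step function $f(x)=c_0+c_1\indicator{x\leq\gamma}$ and any interval $S$,
\[
\expectation{f(X)\mid X\in S,T=t}=c_0+c_1\,w_t(S),\qquad w_t(S):=\probability{X\leq\gamma\mid X\in S,T=t}.
\]
By Bayes' rule, $w_1(S)-w_0(S)$ has the sign of $q_1$ whenever $S$ straddles $\gamma$, and it vanishes when $S$ lies on one side of $\gamma$. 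Assumption~A3 guarantees that all these conditional expectations are well defined.

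Plugging this into~\eqref{eq:sHvar} and~\eqref{eq:sBvar} gives a formula for each treatment-minus-control term. For a component $(\mu',\tau')$ with coefficients $(\alpha_0',\alpha_1',\beta_0',\beta_1')$ and a region $S$, that term is
\[
\beta_0'+\tfrac{\beta_1'}{2}\bigl(w_1(S)+w_0(S)\bigr)+\alpha_1'\bigl(w_1(S)-w_0(S)\bigr).
\]

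\emph{Part 1.} For the bias component $\tau^B\equiv\beta_0'$ is constant, so each bracket reduces to $\beta_0'+\Delta(S)$, where $\Delta(S):=\alpha_1'(w_1(S)-w_0(S))$. The constant $\beta_0'$ cancels in $\signedHet_B$, and it also cancels in $\signedBias_B$ because the weights $k$ and $1-k$ sum to one. This leaves
\[
\signedHet_B(k)=\sqrt{k(1-k)}\bigl(\Delta_L(k)-\Delta_R(k)\bigr),\qquad \signedBias_B(k)=D-k\Delta_L(k)-(1-k)\Delta_R(k),
\]
with $D:=\Delta([0,1])$. At $k=\gamma$ both children are pure, so $\Delta_L=\Delta_R=0$. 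This gives $\signedHet_B(\gamma)=0$ and $\signedBias_B(\gamma)=D$.

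The heterogeneity component satisfies~\eqref{eq:noBias}, so by linearity the bias of the full model equals $D$. The hypothesis that the estimator is biased therefore gives $D\neq0$, hence $\alpha_1'q_1\neq0$. For $k<\gamma$ the left child is pure, so $\Delta_L=0$, while $\Delta_R(k)$ has the sign of $\alpha_1'q_1$, which is the sign of $D$. The case $k>\gamma$ is symmetric. Hence $g(k):=k\Delta_L+(1-k)\Delta_R$ satisfies $g(k)D\geq0$ for all $k$. Consequently $\signedBias_B(k)\leq D$ when $D>0$, and $\signedBias_B(k)\geq D$ when $D<0$, so $\gamma$ is a global extremum.

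\emph{Part 2.} Since $\tau^B$ is constant, $\beta_1''=\beta_1\neq0$. At $k=\gamma$ both children are pure, so the two brackets equal $\beta_0''+\beta_1''$ and $\beta_0''$. Then
\[
\signedBias_H(\gamma)=D_H-\gamma(\beta_0''+\beta_1'')-(1-\gamma)\beta_0''=D_H-\expectation{\tau^H(X)},
\]
where $D_H$ is the whole-node treatment-minus-control term for the heterogeneity component. By~\eqref{eq:noBias}, $D_H=\expectation{\tau^H(X)}$, so $\signedBias_H(\gamma)=0$. Also $\signedHet_H(\gamma)=\sqrt{\gamma(1-\gamma)}\,\beta_1''$.

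For the local extremum I will write $\signedHet_H(k)$ explicitly on $k<\gamma$ and on $k>\gamma$. On $k<\gamma$ only the right child mixes, with
\[
w_t([k,1])=\frac{(\gamma-k)\,p_t^{\leftchild}}{(\gamma-k)\,p_t^{\leftchild}+(1-\gamma)\,p_t^{\rightchild}},
\]
where $p_1^{\cdot}$ is the propensity on that side and $p_0^{\cdot}$ is one minus it. The case $k>\gamma$ is symmetric. I then plan to compute the one-sided derivatives at $\gamma$ and show that, when $\beta_1''>0$, the left derivative is $\geq0$ and the right derivative is $\leq0$, with the signs reversed when $\beta_1''<0$.

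This is the step I expect to be the main obstacle. The derivative of $\sqrt{k(1-k)}$ is nonzero unless $\gamma=1/2$, and mixing $\alpha_1''$ terms enter through $w_1-w_0$. To handle this, I will use the linear relation~\eqref{eq:noBias}, which ties $\alpha_1''q_1$ to $\beta_1''$. I will use it to express $\alpha_1''(w_1-w_0)$ in terms of $\beta_1''$. The aim is to show that the first-order change in the mixed bracket, of order $|\gamma-k|/(1-\gamma)$ times $\beta_1''$, dominates the derivative of the square-root prefactor, which is of order $|\gamma-k|(1-2\gamma)/(2\sqrt{\gamma(1-\gamma)})$.

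If the one-sided derivative comparison fails in general, I will instead compare $\signedHet_H(k)/\sqrt{k(1-k)}$, which is monotone on each side of $\gamma$. I will then argue directly on a neighbourhood of $\gamma$ by a second-order expansion.
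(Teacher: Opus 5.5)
Your Part~1 is correct and takes a genuinely different route from the paper. The paper writes $\hat\tau^{\infty}_{\leftchild},\hat\tau^{\infty}_{\rightchild}$ in closed form and differentiates $\signedBias_B$ on $(0,\gamma)$ and $(\gamma,1)$. It then shows via a negative discriminant that the derivative never vanishes there, and concludes from continuity and $\signedBias_B(0)=\signedBias_B(1)=0$. You replace all of this with the identity $\signedBias_B(k)=D-g(k)$ and the Bayes sign fact $\sgn\bigl(w_1(S)-w_0(S)\bigr)=\sgn(q_1)$ for intervals straddling $\gamma$. That gives $g(k)D>0$ for $k\neq\gamma$, hence a strict global extremum, with no computation and for any admissible decomposition. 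What the paper's calculation buys in addition is that $\signedBias_B$ is monotone on each side of $\gamma$, so it has no spurious local extrema. Your argument does not give this, and the theorem does not claim it. Your proof of $\signedBias_H(\gamma)=0$ from pure children and~\eqref{eq:noBias} is also correct, and shorter than the paper's.

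The gap is the local extremum of $\signedHet_H$ at $\gamma$, which is the substantive half of Part~2. You name the right route: one-sided derivatives at $\gamma$ after eliminating $\alpha_1''$ via~\eqref{eq:noBias}, which is exactly the paper's. But you do not carry it out, and you leave open whether the sign comparison holds. Your fallback would not close this gap. That $\signedHet_H(k)/\sqrt{k(1-k)}$ peaks at $\gamma$ says nothing about $\signedHet_H$ itself when $\gamma\neq 1/2$, because the prefactor is strictly monotone through $\gamma$. Moreover, $\signedHet_H$ has a kink at $\gamma$ with nonzero one-sided slopes, so what is needed is a first-order comparison, not a second-order expansion.

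The missing step is short. Put $\bar p:=q_0+\gamma q_1=\probability{T=1}$.
\begin{itemize}
\item By total probability, $\gamma=\bar p\,w_1([0,1])+(1-\bar p)\,w_0([0,1])$. Hence~\eqref{eq:noBias} reads $\alpha_1''(w_1-w_0)=(\bar p-\tfrac12)\beta_1''(w_1-w_0)$ on $[0,1]$.
\item So $\alpha_1''=(\bar p-\tfrac12)\beta_1''$ when $q_1\neq0$, and the same identity then holds on every interval $S$ (trivially when $q_1=0$).
\item Every bracket of the heterogeneity component is therefore $\beta_0''+\beta_1''\,\tilde w(S)$, with $\tilde w:=\bar p\,w_1+(1-\bar p)\,w_0$.
\item This gives $\signedHet_H(k)=\beta_1''\sqrt{k(1-k)}\,\bigl(1-\tilde w((k,1])\bigr)$ for $k<\gamma$, and $\signedHet_H(k)=\beta_1''\sqrt{k(1-k)}\,\tilde w([0,k])$ for $k>\gamma$.
\end{itemize}
Differentiating your formula for $w_t$ at $k=\gamma$, these two brackets have one-sided slopes $c/(1-\gamma)$ and $-c'/\gamma$. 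Using $\bar p-q_0=\gamma q_1$ and $q_0+q_1-\bar p=(1-\gamma)q_1$, the constants are
\[
c=\bar p\,\frac{q_0+q_1}{q_0}+(1-\bar p)\,\frac{1-q_0-q_1}{1-q_0}=1+\frac{\gamma q_1^2}{q_0(1-q_0)},\qquad
c'=\bar p\,\frac{q_0}{q_0+q_1}+(1-\bar p)\,\frac{1-q_0}{1-q_0-q_1}=1+\frac{(1-\gamma)q_1^2}{(q_0+q_1)(1-q_0-q_1)}.
\]
Hence the left and right derivatives of $\signedHet_H$ at $\gamma$ are
\[
\frac{\beta_1''\,(1-2\gamma+2\gamma c)}{2\sqrt{\gamma(1-\gamma)}}
\quad\text{and}\quad
\frac{\beta_1''\,(1-2\gamma-2(1-\gamma)c')}{2\sqrt{\gamma(1-\gamma)}} .
\]
Since $c,c'\geq 1$, we have $1-2\gamma+2\gamma c\geq 1$ and $1-2\gamma-2(1-\gamma)c'\leq -1$. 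The two slopes therefore have strictly opposite signs, $\sgn(\beta_1'')$ and $-\sgn(\beta_1'')$. Together with continuity at $\gamma$, this gives a strict local extremum; these values coincide with the paper's one-sided limits. The kink always dominates the prefactor's slope $(1-2\gamma)/(2\sqrt{\gamma(1-\gamma)})$, so no case split between $\gamma<1/2$ and $\gamma>1/2$ is needed.
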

The proof is given in the appendix.

\begin{remark}
By the relationship between signed and unsigned splitting criteria, as well as the details of the proof of Theorem~\ref{theo:decompMax}, the extrema in Theorem~\ref{theo:decompMax} of the signed splitting criteria correspond to maxima of their unsigned counterparts while zeros are preserved. As the splitting criteria are non-negative, the zeros correspond to minima of the splitting criteria.
\end{remark}

The interpretation of Theorem~\ref{theo:decompMax} is as follows.
Theorem~\ref{lem:decomposition} shows that, locally at any fixed node, the potential outcome model marginalized with respect to a given splitting variable can be decomposed into two distinct components: one component that captures all heterogeneity in the treatment effect, and another component that captures all bias of the local difference-in-means estimator.
This decomposition is intrinsic to the marginalization mechanism of decision trees and holds at the population level for each candidate split.
As a consequence, a split at a given node may improve the local estimation of heterogeneous treatment effects for two fundamentally different reasons: either because it captures genuine heterogeneity in the treatment effect function, or because it reduces bias in the local average treatment effect induced by confounding.
Standard decision tree–based approaches do not distinguish between these two sources of improvement, as they rely only on the heterogeneity criterion when selecting splits.

Theorem~\ref{theo:decompMax} shows that this separation can be achieved at the population level by the proposed splitting criteria.
More precisely, the heterogeneity component of the signed heterogeneity criterion, $\signedHet_H(k)$, attains an extremum at the true jump location $\gamma$ whenever the treatment effect is non-constant, while the bias component of the signed bias criterion, $\signedBias_B(k)$ is maximized at the same location whenever bias is present.
The behavior predicted by Theorem~\ref{theo:decompMax} can also be observed empirically in the illustrative example shown in Figure~\ref{Fig:criteria_decomposition}.
Thus, the population versions of our splitting criteria $\signedHet(k)$ and $\signedBias(k)$ in~\eqref{eq:popsH} and~\eqref{eq:popsB} recover the split positions associated with treatment effect heterogeneity and bias, respectively, within the appropriate model components, with the former guaranteed up to a local optimum.
Overall, this result provides a theoretical explanation for the empirical improvements observed for the proposed splitting criteria in our simulation studies, presented in the next section.

\section{Simulations and Application}\label{Sec:simulations}
In this section, we apply IntCF on semi-synthetic and synthetic data sets, which allow for evaluation of prediction accuracy by having known ground-truth causal effects, as well as on a real data example. For synthetic data sets we are further able to compare importance measures to the relation of features to outcomes defined by the data model. Throughout, we pay particular attention to the trade-off between prediction accuracy, model simplicity, and interpretability, since this trade-off is the central motivation for our approach.

\subsection{Prediction Accuracy on Real-data Benchmarks}\label{Sec:causalpfn}

First, we compare prediction accuracy of our method with other established tree-based treatment effect estimation methods on four established data sets using the framework of \citet{balazadeh2025causalpfn}. As tree-based competitors we consider generalized random forests (GRF) \citep{athey_generalized_2019}, CausalForestDML as implemented in the EconML package \citep{econml}, and causal forest \citep{wager_estimation_2018}.
These models were employed to gain predictions for average treatment effect (ATE) and CATE on four data sets with ground-truth causal effects: IHDP \citep{ramey_infant_1992,hill_bayesian_2011}, ACIC 2016 \citep{dorie_automated_2019}, Lalonde CPS and PSID cohorts \citep{lalonde_evaluating_1986} with causal effects from RealCause \citep{neal_realcause_2021}. The predictions were evaluated against ground-truth ATE and CATE using relative ATE error and precision in estimation of heterogeneous treatment effects (PEHE) \citep[see][]{balazadeh2025causalpfn}.

Results are shown in Table~\ref{tab:causal_effect_results}. Overall, IntCF is mostly comparable with causal forest, with some improvement on data sets IHDP and ACIC 2016, while causal forest produces better ATE estimates on the Lalonde data sets. The results for the other two methods follow a different pattern: they perform better in estimating ATE for data sets IHDP and ACIC 2016, but their results for PEHE and ATE on the Lalonde data sets are significantly worse than those of IntCF. Overall, IntCF performs best in two settings, is second best in three settings, and third out of four in the remaining three settings, so that we conclude that IntCF is very competitive in terms of prediction accuracy compared to other established tree-based competitors.

Beyond the tree-based competitors considered here, \citep{balazadeh2025causalpfn} also benchmark several additional, non-tree-based methods on the same four data sets, using the identical evaluation setup we adopt here; we refer to \citet{balazadeh2025causalpfn} for the full comparison. Some of these methods, most notably CausalPFN, achieve higher prediction accuracy than any of the tree-based methods considered in Table~\ref{tab:causal_effect_results}, including IntCF, in most settings. However, CausalPFN and related methods are complete black boxes: unlike tree-based approaches, they offer no direct way to inspect which features drive a given prediction, or to what extent, and hence cannot answer the type of interpretability questions that motivate this paper. Tree-based methods therefore occupy a useful middle ground between prediction accuracy and interpretability. Within this class of methods, IntCF is particularly attractive: as shown above, it is competitive in prediction accuracy with the best tree-based alternatives, while, as we show in the remainder of this section, achieving substantially better interpretability than all of them.

\begin{table}[tb]
\centering
\begin{adjustbox}{max width=\textwidth}
\LARGE
\begin{tabular}{lcccc|cccc}
\toprule
\multirow{2}{*}{\textbf{Method}} &
\multicolumn{4}{c|}{\textbf{Mean PEHE $\pm$ Standard Error} $(\downarrow \text{better})$} &
\multicolumn{4}{c}{\textbf{Mean ATE Relative Error $\pm$ Standard Error} $(\downarrow \text{better})$} \\
\cmidrule(r){2-5}\cmidrule(l){6-9}
& IHDP & ACIC 2016 & \multicolumn{1}{c}{Lalonde {\footnotesize CPS}} & \multicolumn{1}{c|}{Lalonde {\footnotesize PSID}} & 
IHDP & ACIC 2016 & Lalonde {\footnotesize CPS} & Lalonde {\footnotesize PSID} \\
& & & ($\times10^{3}$) & ($\times10^{3}$) & & & & \\
\midrule

\textbf{IntCF} &
\firstbest{2.99$\pm$0.49} &
1.57$\pm$0.10 &
\secondbest{9.99$\pm$0.06} &
\firstbest{15.93$\pm$0.27} &
0.21$\pm$0.04 &
0.15$\pm$0.04 &
\secondbest{0.39$\pm$0.02} &
\secondbest{0.18$\pm$0.02} \\

\midrule

causal forest &
3.72$\pm$0.61 &
2.19$\pm$0.12 &
\firstbest{9.58$\pm$0.03} &
\secondbest{16.02$\pm$0.17} &
0.22$\pm$0.04 &
0.28$\pm$0.06 &
\firstbest{0.24$\pm$0.01} &
\firstbest{0.09$\pm$0.01} \\

\midrule

GRF &
\secondbest{3.67$\pm$0.61}& 
\firstbest{1.32$\pm$0.30}& 
12.33$\pm$0.06& 
22.91$\pm$0.17& %% CATE ENDS
\secondbest{0.18$\pm$0.03}& 
\secondbest{0.07$\pm$0.02}& 
0.82$\pm$0.02& 
0.85$\pm$0.02\\

Forest DML&
4.53$\pm$0.73& 
\secondbest{1.48$\pm$0.31}& 
12.95$\pm$0.04& 
22.99$\pm$0.15&  %% CATE ENDS
\firstbest{0.08$\pm$0.01}&
\firstbest{0.05$\pm$0.01}& 
1.03$\pm$0.01& 
1.05$\pm$0.01\\

\bottomrule
\end{tabular}
\end{adjustbox}
\caption{\textbf{CATE \& ATE results.} Columns correspond to benchmark suites: IHDP, ACIC~2016, Lalonde {\tiny CPS/PSID}. \emph{(left half)} mean PEHE, \emph{(right half)} mean ATE relative error. Lalonde PEHE is in thousands. The \firstbest{best} and \secondbest{second best} entries in each column are highlighted. This table is modified from \citet{balazadeh2025causalpfn}, with results for IntCF in the first line, results for causal forest \citep{wager_estimation_2018} in the second line and the results for GRF and Forest DML of \citet{balazadeh2025causalpfn} as comparison in the remaining two lines.} 
\label{tab:causal_effect_results}
\end{table}

\subsection{Interpretation Accuracy on Synthetic Data}\label{Sec:synthetic_sims}
While the data sets in Section~\ref{Sec:causalpfn} allow for evaluation of prediction accuracy by providing ground-truth causal effects, they do not lend themselves to evaluation of interpretability of models, as the mechanisms behind these effects are not known. For this reason, we generated synthetic data to compare importance measures to the structure of treatment effects as defined by the data model.

Four data models were used to generate the synthetic data sets for this section, each with a complexity parameter \( s \). This parameter scaled the dimension of the data, so that features for all models are generated uniformly from \( [0, 1]^d \) with \( d=5s \), and the parameter also affected the functions \( \mu, \tau, p \) defining the models. These models were defined as follows:
\begin{enumerate}
\item Interaction model: An adaption of a, so-called, locally-spiky-sparse (LSS) model, based on \citep{basu_2018, behr_provable_2022},
  with interactions for the treatment effect, but constant baseline and randomized treatment assignment:
  \begin{align*}
    p(X) &= 0.5, \\
    \mu(X) &= 0, \\
    \tau(X) &= \sum_{i=1}^{s} \indicator{X_{2i-1} \leq \gamma} \cdot \indicator{X_{2i} \leq \gamma},
  \end{align*}
  where \( \gamma = 0.7 \). The features responsible for treatment effect heterogeneity are therefore $X_1,\dotsc,X_{2s}$, appearing in interacting pairs $(X_{2i-1},X_{2i})$; since $p$ and $\mu$ do not depend on $X$ at all, this model contains no confounders. Note that the marginal effect (see Section~\ref{Sec:marginalization}) of any single feature in this LSS model again corresponds to a change point model, directly connecting this simulation setting to our theoretical analysis in Section~\ref{Sec:theory}.
  \item Change point model: An additive change point model with both confounding features and features responsible for treatment effect heterogeneity:
  \begin{align*}
    p(X) &= 0.2 + \frac{0.6}{s} \sum_{i=s+1}^{2s}\indicator{X_{i} \leq \gamma}, \\
    \mu(X) &= \sum_{i=s+1}^{2s} \indicator{X_{i} \leq \gamma}, \\
    \tau(X) &= \sum_{i=1}^{s} \indicator{X_i \leq \gamma},
  \end{align*}
  this time with \( \gamma = 0.5 \). The features responsible for treatment effect heterogeneity are therefore $X_1,\dotsc,X_s$, while $X_{s+1},\dotsc,X_{2s}$ act as pure confounders, affecting both the propensity and the main outcome but not the treatment effect itself.
  \item Linear model: A model with linear functions for propensity, main effect and treatment effect:
  \begin{align*}
    p(X) &= 0.25 + \frac{0.5}{s} \sum_{i=s+1}^{2s} \randomfeatures_i, \\
    \mu(X) &= \sum_{i=s+1}^{2s} \randomfeatures_i, \\
    \tau(X) &= \sum_{i=1}^{s} \randomfeatures_i.
  \end{align*}
  As in the change point model, the features responsible for treatment effect heterogeneity are $X_1,\dotsc,X_s$, while $X_{s+1},\dotsc,X_{2s}$ act as pure confounders.
  \item Mixed model: Combination of linear and LSS model:
  \begin{align*}
    p(X) &= 0.2 + \frac{0.6}{s} \sum_{i=2s+1}^{3s} \indicator{\randomfeatures_i \leq \gamma}, \\
    \mu(X) &= \sum_{i=2s+1}^{3s} \randomfeatures_i, \\
    \tau(X) &= \sum_{i=1}^{s} \indicator{\randomfeatures_{2i-1} \leq \gamma} \cdot \indicator{\randomfeatures_{2i} \leq \gamma} + \frac{1}{s} \sum_{i=s+1}^{2s} \randomfeatures_i,
  \end{align*}
  with \( \gamma = 0.7 \). The features responsible for treatment effect heterogeneity are therefore $X_1,\dotsc,X_{2s}$, with $X_{s+1},\dotsc,X_{2s}$ entering both the interaction and the linear part of $\tau$; $X_{2s+1},\dotsc,X_{3s}$ act as pure confounders.
\end{enumerate}

In all four models, any remaining features beyond those listed above, up to the total dimension $d=5s$, are pure noise, unrelated to $p$, $\mu$, or $\tau$. The ROC-AUC statistic which we introduced below treats a feature as a positive only if it appears in the formula for $\tau$; confounders and noise features are therefore both treated as negatives, even though confounders do affect $p$ and $\mu$.
Data was generated with normal distributed noise terms where the variance was equal to the variance of treatment effects.

Again, we compare our method IntCF to the three other tree-based causal machine learning methods, causal forest \citep{wager_estimation_2018}, GRF \citep{athey_generalized_2019} and CausalForestDML from EconML \citep{econml}. For the evaluation, both predictions and feature importance generated by these models were considered. Both GRF and CausalForestDML use a maximum depth and decay exponent for their feature importance calculation. We set maximum depth to infinity (or a very high value if not possible) and decay exponent to zero to ensure that the importance measures are comparable with those of the other estimation methods. For predictions, the precision in estimation of heterogeneous treatment effects (PEHE) was calculated and normalized by dividing it by the PEHE of a dummy predictor, which predicts a constant treatment effect by the difference-in-means from the training samples. For the feature importance, we considered a ROC-AUC statistic which measures how good the feature importance can distinguish between features which actually appear in the respective formula for treatment effect \( \tau \) and those which do not.

Each of the models with parameter \( s \) from 1 to 10 were used for simulations. For each model and parameter, $500 \cdot s$ training samples and 100\,000 separate evaluation samples were generated. The training samples were then used to train the machine learning models, which were subsequently evaluated on the evaluation samples. For each combination of data model, parameter \( s \), and machine learning method, these steps were repeated 20 times.

\begin{figure}[tb]
  \centering
  \includegraphics[width=0.9\textwidth]{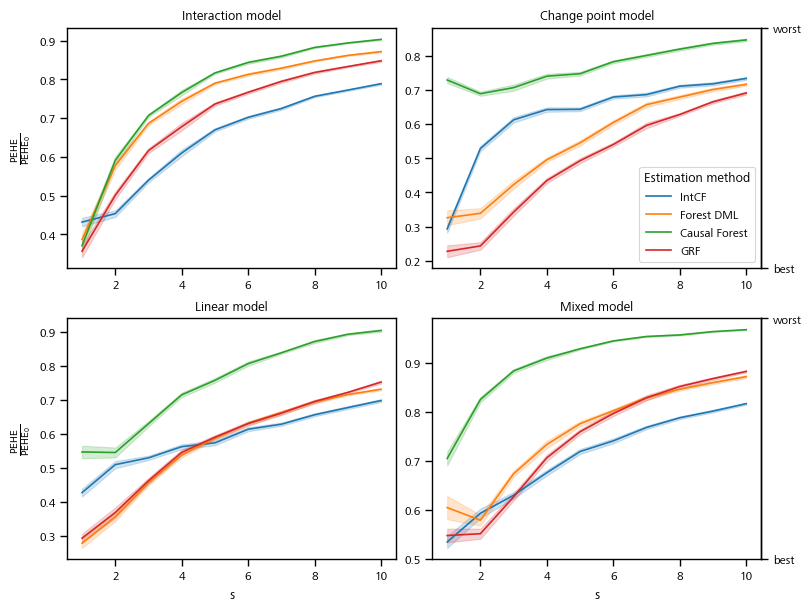}
  \caption{Comparison of precision in estimation of heterogeneous effects (PEHE) of IntCF (blue), Forest DML (yellow), causal forest (green) and GRF (red), normalized to PEHE of a constant difference-in-means estimator. For each method, a band of one standard error is around the respective mean. In this figure, lower values are better. Top left shows interaction model, top right change point model, bottom left linear model, and bottom right mixed model.}
  \label{Fig:pehe-synthetic_models}
\end{figure}
The results of these simulations for precision in estimation are shown in Figure~\ref{Fig:pehe-synthetic_models}.
In most considered data models IntCF achieves the best prediction results, especially for the Interaction and Mixed model. For the Change point model, Forest DML and GRF, both of which include a double machine learning/orthogonalization step, are the best performing methods. Between the two methods that do not use this additional step, IntCF has a better precision than causal forest in all cases except one (Interaction model with \( s=1 \)).

\begin{figure}[tb]
  \centering
  \includegraphics[width=0.9\textwidth]{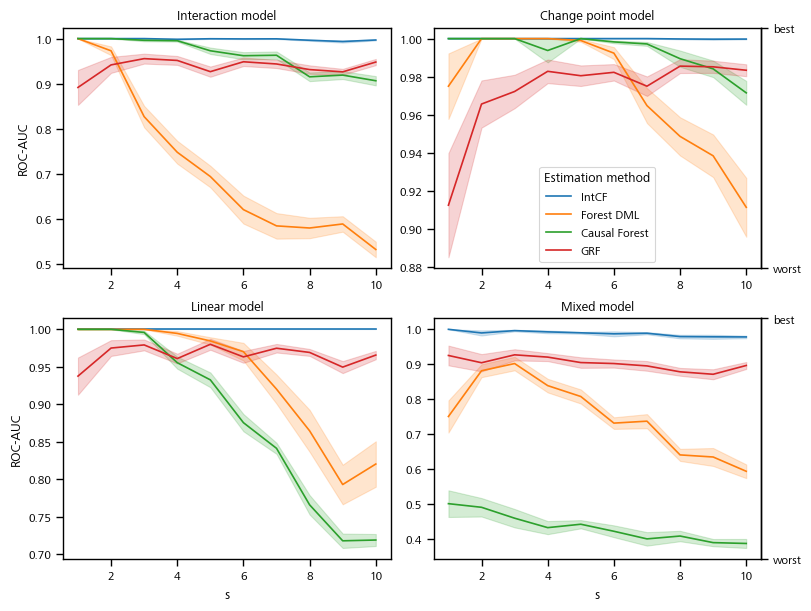}
  \caption{ROC-AUC for feature importance as signal feature classifier for IntCF (blue), Forest DML (yellow), causal forest (green) and GRF (red). For each method, a band of one standard error is around the respective mean. In this figure, higher values are better.  Top left shows interaction model, top right change point model, bottom left linear model, and bottom right mixed model.}
  \label{Fig:roc_auc-synthetic_models}
\end{figure}
Simulation results regarding feature importance are shown in Figure~\ref{Fig:roc_auc-synthetic_models}. In all considered combinations of data model and complexity parameter \( s \), the feature importance of IntCF results in a ROC-AUC of almost 1, while each other method achieves smaller values in most situations. This reflects a substantial improvement in interpretability, as IntCF is best at identifying the features actually relevant for treatment effect heterogeneity.
This improvement is particularly notable given the relative simplicity of IntCF. As discussed in Section~\ref{Sec:algorithm}, IntCF retains the same structure as the original RF: predictions are obtained directly as the average of individual decision trees, with no additional correction step required, and feature importance can be read off directly from the tree structure. Causal forest shares this simple structure, but does not distinguish heterogeneity from bias when selecting splits, and consequently performs worse in our feature importance comparison. GRF and CausalForestDML, in contrast, rely on a substantially more involved procedure: both incorporate an additional orthogonalization (double machine learning) step, which alters the prediction mechanism and means that predictions can no longer be read off directly from the individual trees, unlike for IntCF and causal forest. Despite this added complexity, GRF and CausalForestDML do not achieve better feature importance scores than IntCF in our simulations. IntCF thus achieves the strongest interpretability results of all four methods while remaining, by construction, the structurally simplest.

\subsection{Application on NHEFS Data set}
\begin{figure}[tb]
  \centering\includegraphics[width=0.6\textwidth]{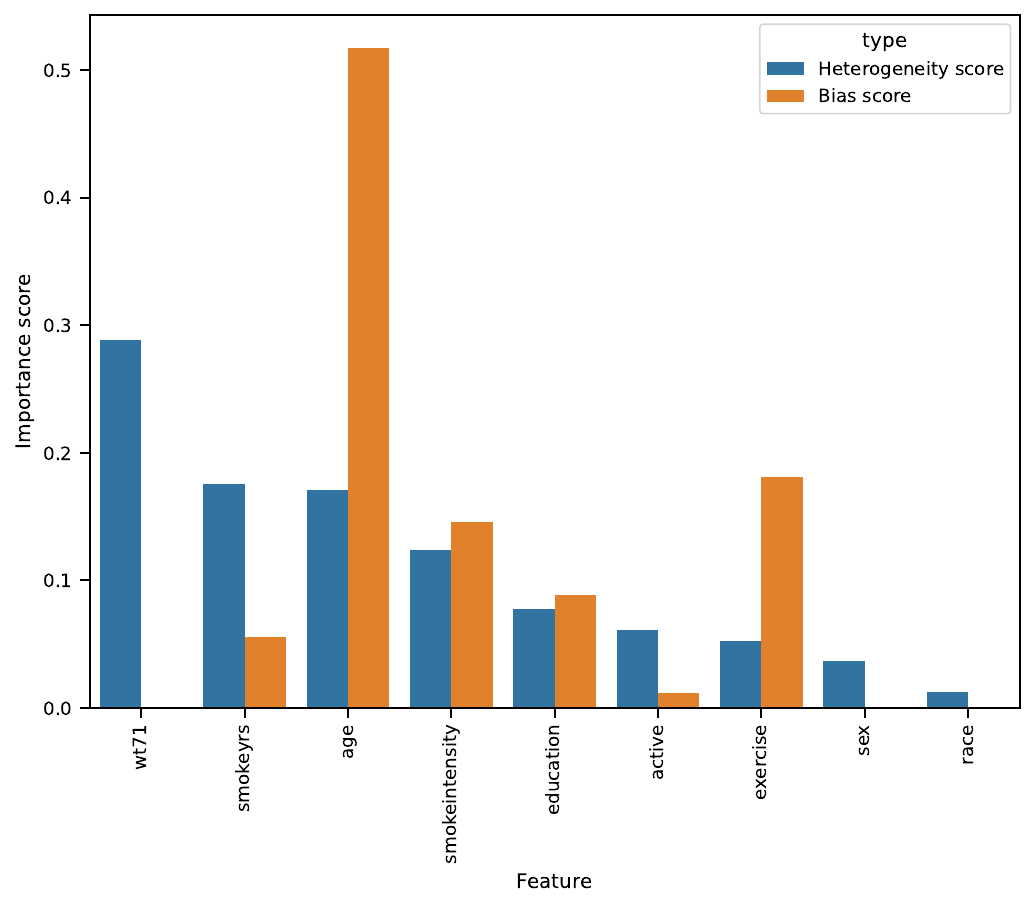}
  \caption{Importance scores for heterogeneity (blue) and bias (orange) produced by IntCF for NHEFS data set. Features are sorted by heterogeneity score. For the bias scores, negative values where truncated at 0.}\label{fig:nhefs}
\end{figure}
Finally, we demonstrate the application of our algorithm on a real-world data set: the National Health and Nutrition Examination Survey I Epidemiologic Followup Study (NHEFS)\footnote{More information about this study can be found at \url{https://wwwn.cdc.gov/nchs/nhanes/nhefs/}.} is a longitudinal study of a cohort first examined in 1971--75. This data set was also used previously to exemplify the handling of confounders and treatment effect heterogeneity \citep{hernan_causal_2020}. Like \citet{hernan_causal_2020}, we consider data from 1566 smokers to estimate the effect of quitting smoking on weight change up to the follow-up examination in 1982--84, using the same 9 baseline covariates as \citet{hernan_causal_2020}, including, e.g., the weight at the initial examination (\texttt{wt71}), the number of years of smoking (\texttt{smokeyrs}), and age (\texttt{age}). Notably, our method does not require these covariates to be labeled a priori as confounders or effect modifiers; instead, it attributes each split to bias correction or heterogeneity directly from the fitted tree.

For the average treatment effect, we obtain an estimated weight gain of \( 3.2\,\mathrm{kg} \) due to quitting smoking, slightly lower than the \( 3.4\,\mathrm{kg} \) reported by \citet{hernan_causal_2020}. Additionally, we evaluate feature importance scores for the contribution to heterogeneity and bias of the predictions, shown in Figure~\ref{fig:nhefs}.
As can be seen in Figure~\ref{fig:nhefs}, the feature with the highest heterogeneity importance is \texttt{wt71}, the weight at the initial examination, consistent with other methods that identify this feature as having the strongest influence on heterogeneity \citep[see, e.g.,][]{benard_2025}. In addition, our bias importance score identifies \texttt{age} as the most relevant confounder.

\section{Discussion}\label{Sec:discussion}

\subsection{Summary}
In this paper, we introduced the interpretable causal forest (IntCF), a modification of RF for estimating individual treatment effects. The only change relative to standard RF is the splitting criterion: by combining a heterogeneity criterion with a novel bias criterion, each split can be attributed to either genuine heterogeneity in the treatment effect or a correction for confounding bias. Together with a validation step that honestly re-evaluates these contributions at each node (Section~\ref{Sec:validation-of-splits}), this enables direct interpretation of the resulting tree structure, for example through standard feature importance measures. We supported this splitting criterion theoretically by analyzing its population limit under a change point model tailored to causal inference, showing that it correctly attributes a split to bias correction precisely when the naive difference-in-means estimator is biased, and to heterogeneity precisely when the treatment effect is genuinely non-constant. In simulations, IntCF achieved prediction accuracy competitive with established tree-based methods, while substantially outperforming all of them in recovering the features responsible for treatment effect heterogeneity, especially in the presence of confounders.

\subsection{Simplicity as a Feature, not a Limitation}
A recurring theme of this paper is that IntCF requires no machinery beyond a modified splitting criterion: predictions are still obtained directly as the average of individual decision trees, exactly as in the original RF. This stands in contrast to generalized random forest and CausalForestDML, both of which rely on an additional orthogonalization (double machine learning) step that changes the prediction mechanism and severs the direct correspondence between the fitted trees and the resulting predictions. Causal forest shares IntCF's simple structure, but does not distinguish heterogeneity from bias at the split level, and correspondingly falls short of IntCF's interpretability in our simulations. That IntCF exceeds the interpretability of substantially more complex competitors, while remaining competitive with them in prediction accuracy, is in our view the central practical contribution of this paper: interpretability need not come at the cost of added model complexity.

\subsection{Limitations and Future Work}
Several open questions remain. On the theoretical side, our analysis of the splitting criteria is currently limited to the population level; extending these results to finite samples, to more general data-generating processes beyond the change point model, and to the validation step from Section~\ref{Sec:validation-of-splits} are natural next steps. Likewise, our analysis of marginalization effects in Section~\ref{Sec:marginalization} considered only a single confounder at a time; understanding how these effects interact in the presence of multiple, simultaneously confounding features warrants further investigation.

Another open question is the consistency of the resulting IntCF estimator, which we did not establish in this paper. We expect that an adaptation of the consistency proof for causal forest from \citet{wager_estimation_2018} should be feasible. Such a proof typically requires the conditional mean function $\expectation{Y \mid \randomfeatures=\features}$ to be Lipschitz continuous, an assumption violated by the discontinuous change point model used in our theoretical analysis. However, as tree depth increases, the fraction of samples affected by any such discontinuity should vanish, suggesting that this is a technical rather than a fundamental obstacle.

\section*{Code Availability}
An implementation of IntCF, together with code to reproduce all analyses and figures in this paper, is publicly available at \url{https://github.com/behr-group/intcf}.

\section*{Acknowledgments}
The project was funded by the Deutsche Forschungsgemeinschaft (DFG, German Research Foundation), project number 509149993, TRR 374.

\bibliography{intcf}

\appendix
\section{Proofs}
\subsection{Proofs of Section~\ref{Sec:algorithm}}\label{Append:proofsAlgo}

\begin{proof}[Proof of Lemma~\ref{lem:mse_to_squared_change}]
  In the following computation, the property \( \indexTraining_{\parent} = \indexTraining_{\leftchild} \sqcup \indexTraining_{\rightchild} \) is used in the second equation and Assumption~\eqref{eq:averagePredCausal} is used in the third equation:
  \begin{align*}
      &\frac{1}{N} \sum_{i\in\indexTraining_{\parent}} (\hat{\tau}_{\parent} - \tau_{i})^{2} - \frac{1}{N} \left(\sum_{i\in\indexTraining_{\leftchild}} (\hat{\tau}_{\leftchild} - \tau_{i})^{2} + \sum_{i\in\indexTraining_{\rightchild}} (\hat{\tau}_{\rightchild} - \tau_{i})^{2}\right) \\
      &= \frac{1}{N} \left(\sum_{i\in\indexTraining_{\parent}} \hat{\tau}_{\parent}^{2} - \sum_{i\in\indexTraining_{\parent}} 2 \hat{\tau}_{\parent} \tau_{i} + \sum_{i\in\indexTraining_{\parent}} \tau_{i}^{2}\right) \\ &\phantom{=}- \frac{1}{N} \left(\left(\sum_{i\in\indexTraining_{\leftchild}} \hat{\tau}_{\leftchild}^{2} - \sum_{i\in\indexTraining_{\leftchild}} 2 \hat{\tau}_{\leftchild} \tau_{i} + \sum_{i\in\indexTraining_{\leftchild}} \tau_{i}^{2}\right) + \left(\sum_{i\in\indexTraining_{\rightchild}} \hat{\tau}_{\rightchild}^{2} - \sum_{i\in\indexTraining_{\rightchild}} 2 \hat{\tau}_{\rightchild} \tau_{i} + \sum_{i\in\indexTraining_{\rightchild}} \tau_{i}^{2}\right)\right) \\
      &= \frac{1}{N} \left((n_{\leftchild} + n_{\rightchild}) \hat{\tau}_{\parent}^{2} - 2 \hat{\tau}_{\parent} \left(\sum_{i\in\indexTraining_{\leftchild}} \tau_{i} + \sum_{i\in\indexTraining_{\rightchild}} \tau_{i}\right) + \sum_{i\in\indexTraining_{\parent}} \tau_{i}^{2}\right) \\ &\phantom{=}- \frac{1}{N} \left(n_{\leftchild} \hat{\tau}_{\leftchild}^{2} - 2 \hat{\tau}_{\leftchild} \sum_{i\in\indexTraining_{\leftchild}} \tau_{i} + n_{\rightchild} \hat{\tau}_{\rightchild}^{2} - 2 \hat{\tau}_{\rightchild} \sum_{i\in\indexTraining_{\rightchild}} \tau_{i} + \sum_{i\in\indexTraining_{\parent}} \tau_{i}^{2}\right) \\
      &= \frac{1}{N} \left((n_{\leftchild} + n_{\rightchild}) \hat{\tau}_{\parent}^{2} - 2 \hat{\tau}_{\parent} (n_{\leftchild} \tau_{\leftchild} + n_{\rightchild} \tau_{\rightchild}) + \sum_{i\in\indexTraining_{\parent}} \tau_{i}^{2}\right) \\ &\phantom{=}- \frac{1}{N} \left(n_{\leftchild} \hat{\tau}_{\leftchild}^{2} - 2 n_{\leftchild} \hat{\tau}_{\leftchild}^{2} + n_{\rightchild} \hat{\tau}_{\rightchild}^{2} - 2 n_{\rightchild} \hat{\tau}_{\rightchild}^{2} + \sum_{i\in\indexTraining_{\parent}} \tau_{i}^{2}\right) \\
      &= \frac{1}{N} \left((n_{\leftchild} + n_{\rightchild}) \hat{\tau}_{\parent}^{2} - 2 \hat{\tau}_{\parent} (n_{\leftchild} \tau_{\leftchild} + n_{\rightchild} \tau_{\rightchild}) + n_{\leftchild} \hat{\tau}_{\leftchild}^{2} + n_{\rightchild} \hat{\tau}_{\rightchild}^{2}\right) \\
      &= \frac{1}{N} (n_{\leftchild} (\hat{\tau}_{\parent} - \hat{\tau}_{\leftchild})^{2} + n_{\rightchild} (\hat{\tau}_{\parent} - \hat{\tau}_{\rightchild})^{2}) \qedhere
  \end{align*}
\end{proof}

\begin{proof}[Proof of Lemma~\ref{lem:squared_change_to_het_criterion}]
  By using assumption~\eqref{eq:meanPredCausal} direct computation yields:
  \begin{align*}
    &\frac{1}{N} (n_{\leftchild} (\hat{\tau}_{\parent} - \hat{\tau}_{\leftchild})^{2} + n_{\rightchild} (\hat{\tau}_{\parent} - \hat{\tau}_{\rightchild})^{2}) \\
    &= \frac{1}{N} \left(n_{\leftchild} \left(\frac{n_{\leftchild} \hat{\tau}_{\leftchild} + n_{\rightchild} \hat{\tau}_{\rightchild}}{n_{\leftchild} + n_{\rightchild}} - \hat{\tau}_{\leftchild}\right)^{2} + n_{\rightchild} \left(\frac{n_{\leftchild} \hat{\tau}_{\leftchild} + n_{\rightchild} \hat{\tau}_{\rightchild}}{n_{\leftchild} + n_{\rightchild}} - \hat{\tau}_{\rightchild}\right)^{2}\right) \\
    &= \frac{1}{N} \left(n_{\leftchild} \left(\frac{-n_{\rightchild} \hat{\tau}_{\leftchild} + n_{\rightchild} \hat{\tau}_{\rightchild}}{n_{\leftchild} + n_{\rightchild}}\right)^{2} + n_{\rightchild} \left(\frac{n_{\leftchild} \hat{\tau}_{\leftchild} - n_{\leftchild} \hat{\tau}_{\rightchild}}{n_{\leftchild} + n_{\rightchild}}\right)^{2}\right) \\
    &= \frac{1}{N} \left(\frac{n_{\leftchild} \cdot n_{\rightchild}^{2}}{(n_{\leftchild} + n_{\rightchild})^{2}} \left(-\hat{\tau}_{\leftchild} + \hat{\tau}_{\rightchild}\right)^{2} + \frac{n_{\leftchild}^{2} \cdot n_{\rightchild}}{(n_{\leftchild} + n_{\rightchild})^{2}} \left(\hat{\tau}_{\leftchild} - \hat{\tau}_{\rightchild}\right)^{2}\right) \\
    &= \frac{n_{\leftchild} \cdot n_{\rightchild}}{N (n_{\leftchild} + n_{\rightchild})} \left(\frac{n_{\rightchild}}{n_{\leftchild} + n_{\rightchild}} \left(\hat{\tau}_{\leftchild} - \hat{\tau}_{\rightchild}\right)^{2} + \frac{n_{\leftchild}}{n_{\leftchild} + n_{\rightchild}} \left(\hat{\tau}_{\leftchild} - \hat{\tau}_{\rightchild}\right)^{2}\right) \\
    &= \frac{n_{\leftchild} \cdot n_{\rightchild}}{N (n_{\leftchild} + n_{\rightchild})} \left(\hat{\tau}_{\leftchild} - \hat{\tau}_{\rightchild}\right)^{2} \qedhere
  \end{align*}
\end{proof}

\begin{proof}[Proof of Lemma~\ref{lem:derivation_bias_criterion}]
  By direct computation:
  \begin{align*}
    &\frac{1}{N} (n_{\leftchild} (\hat{\tau}_{\parent} - \hat{\tau}_{\leftchild})^{2} + n_{\rightchild} (\hat{\tau}_{\parent} - \hat{\tau}_{\rightchild})^{2}) - \frac{n_{\leftchild} \cdot n_{\rightchild}}{N (n_{\leftchild} + n_{\rightchild})} (\hat{\tau}_{\leftchild} - \hat{\tau}_{\rightchild})^{2} \\
    &= \frac{1}{N} \left(n_{\leftchild} (\hat{\tau}_{\parent} - \hat{\tau}_{\leftchild})^{2} + n_{\rightchild} (\hat{\tau}_{\parent} - \hat{\tau}_{\rightchild})^{2} - \frac{n_{\leftchild} \cdot n_{\rightchild}}{n_{\leftchild} + n_{\rightchild}} (\hat{\tau}_{\leftchild} - \hat{\tau}_{\rightchild})^{2}\right) \\
    &= \frac{1}{N} \bigg(n_{\leftchild} (\hat{\tau}_{\parent}^{2} - 2 \hat{\tau}_{\parent} \hat{\tau}_{\leftchild} + \hat{\tau}_{\leftchild}^{2}) + n_{\rightchild} (\hat{\tau}_{\parent}^{2} - 2 \hat{\tau}_{\parent} \hat{\tau}_{\rightchild} + \hat{\tau}_{\rightchild}^{2}) \\&\phantom{= \frac{1}{N}\bigg(} - \frac{n_{\leftchild} \cdot n_{\rightchild}}{n_{\leftchild} + n_{\rightchild}} (\hat{\tau}_{\leftchild}^{2} - 2\hat{\tau}_{\leftchild}\hat{\tau}_{\rightchild} + \hat{\tau}_{\rightchild}^{2})\bigg) \\
    &= \frac{1}{N} \bigg((n_{\leftchild} + n_{\rightchild}) \hat{\tau}_{\parent}^{2} - 2 \hat{\tau}_{\parent} (n_{\leftchild} \hat{\tau}_{\leftchild} + n_{\rightchild} \hat{\tau}_{\rightchild}) \\&\phantom{= \frac{1}{N}\bigg(} + \frac{n_{\leftchild}^{2}}{n_{\leftchild} + n_{\rightchild}} \hat{\tau}_{\leftchild}^{2} + 2 \frac{n_{\leftchild} \cdot n_{\rightchild}}{n_{\leftchild} + n_{\rightchild}} \hat{\tau}_{\leftchild} \hat{\tau}_{\rightchild} + \frac{n_{\rightchild}^{2}}{n_{\leftchild} + n_{\rightchild}} \hat{\tau}_{\rightchild}^{2}\bigg) \\
    &= \frac{n_{\leftchild} + n_{\rightchild}}{N} \left(\hat{\tau}_{\parent}^{2} - 2 \hat{\tau} \frac{n_{\leftchild} \hat{\tau}_{\leftchild} + n_{\rightchild} \hat{\tau}_{\rightchild}}{n_{\leftchild} + n_{\rightchild}} + \left(\frac{n_{\leftchild} \hat{\tau}_{\leftchild} + n_{\rightchild} \hat{\tau}_{\rightchild}}{n_{\leftchild} + n_{\rightchild}}\right)^{2}\right) \\
    &= \frac{n_{\leftchild} + n_{\rightchild}}{N} \left(\hat{\tau}_{\parent} - \frac{n_{\leftchild} \hat{\tau}_{\leftchild} + n_{\rightchild} \hat{\tau}_{\rightchild}}{n_{\leftchild}+n_{\rightchild}}\right)^{2} \qedhere
  \end{align*}
\end{proof}

\begin{proof}[Proof of Lemma~\ref{lem:sum_of_validation}]
  At each node \( \node \) we have \( \hat{\tau}_{\node}^{\backward} = \frac{1}{n} \sum_{i\in\indexValidation_{\node}} \hat{\tau}_{i}^{\validaton} \), so
  \begin{align*}
    &\sum_{i\in\indexValidation_{\node}} \left(\hat{\tau}_{\node}^{\forward} - \hat{\tau}_{i}^{\validaton}\right)^{2} - \sum_{i\in\indexValidation_{\node}} \left(\hat{\tau}_{\node}^{\backward} - \hat{\tau}_{i}^{\validaton}\right)^{2} \\
    &= \sum_{i\in\indexValidation_{\node}} \left((\hat{\tau}_{\node}^{\forward})^{2} - 2 \hat{\tau}_{\node}^{\forward} \hat{\tau}_{i}^{\validaton} + (\hat{\tau}_{i}^{\validaton})^{2}\right) - \sum_{i\in\indexValidation_{\node}} \left((\hat{\tau}_{\node}^{\backward})^{2} - 2 \hat{\tau}_{\node}^{\backward} \hat{\tau}_{i}^{\validaton} + (\hat{\tau}_{i}^{\validaton})^{2}\right) \\
    &= n \left((\hat{\tau}_{\node}^{\forward})^{2} - (\hat{\tau}_{\node}^{\backward})^{2}\right) + 2 \left(\hat{\tau}_{\node}^{\backward} - \hat{\tau}_{\node}^{\forward}\right) \sum_{i\in\indexValidation_{\node}} \hat{\tau}_{i}^{\validaton} \\
    &= n \left((\hat{\tau}_{\node}^{\forward})^{2} - (\hat{\tau}_{\node}^{\backward})^{2}\right) + 2 \left(\hat{\tau}_{\node}^{\backward} - \hat{\tau}_{\node}^{\forward}\right) \cdot n \cdot \hat{\tau}_{\node}^{\backward} \\
    &= n \left((\hat{\tau}_{\node}^{\forward})^{2} - (\hat{\tau}_{\node}^{\backward})^{2} + 2 (\hat{\tau}_{\node}^{\backward})^{2} - 2\hat{\tau}_{\node}^{\forward}\hat{\tau}_{\node}^{\backward}\right) \\
    &= n \left((\hat{\tau}_{\node}^{\forward})^{2} - 2 \hat{\tau}_{\node}^{\forward}\hat{\tau}_{\node}^{\backward} + (\hat{\tau}_{\node}^{\backward})^{2}\right) \\
    &= n \left(\hat{\tau}_{\node}^{\forward} - \hat{\tau}_{\node}^{\backward}\right)^{2},
  \end{align*}
  and, using \( \hat{\tau}_{\parent}^{\backward} = \frac{n_{\leftchild} \hat{\tau}_{\leftchild}^{\backward} + n_{\rightchild} \hat{\tau}_{\rightchild}^{\backward}}{n_{\leftchild} + n_{\rightchild}} \), we get
  \begin{align*}
    & \sum_{i\in\indexValidation_{\parent}} \left(\hat{\tau}_{\parent}^{\backward} - \hat{\tau}_{i}^{\validaton}\right)^{2} - \sum_{i\in\indexValidation_{\leftchild}} \left(\hat{\tau}_{\leftchild}^{\backward} - \hat{\tau}_{i}^{\validaton}\right)^{2} - \sum_{i\in\indexValidation_{\rightchild}} \left(\hat{\tau}_{\rightchild}^{\backward} - \hat{\tau}_{i}^{\validaton}\right)^{2} \\
    &= \sum_{i\in\indexValidation_{\rightchild}} \left((\hat{\tau}_{\parent}^{\backward})^{2} - 2 \hat{\tau}_{\parent}^{\backward} \hat{\tau}_{i}^{\validaton} + (\hat{\tau}_{i}^{\validaton})^{2}\right) \\
    &\phantom{=} - \sum_{i\in\indexValidation_{\leftchild}} \left((\hat{\tau}_{\leftchild}^{\backward})^{2} - 2 \hat{\tau}_{\leftchild}^{\backward} \hat{\tau}_{i}^{\validaton} + (\hat{\tau}_{i}^{\validaton})^{2}\right) - \sum_{i\in\indexValidation_{\rightchild}} \left((\hat{\tau}_{\rightchild}^{\backward})^{2} - 2 \hat{\tau}_{\rightchild}^{\backward} \hat{\tau}_{i}^{\validaton} + (\hat{\tau}_{i}^{\validaton})^{2}\right) \\
    &= (n_{\leftchild}+n_{\rightchild}) (\hat{\tau}_{\parent}^{\backward})^{2} - 2 (n_{\leftchild}+n_{\rightchild}) (\hat{\tau}_{\parent}^{\backward})^{2} \\
    &\phantom{=} - n_{\leftchild} (\hat{\tau}_{\leftchild}^{\backward})^{2} + 2 n_{\leftchild} (\hat{\tau}_{\leftchild}^{\backward})^{2} - n_{\rightchild} (\hat{\tau}_{\rightchild}^{\backward})^{2} + 2 n_{\rightchild} (\hat{\tau}_{\rightchild}^{\backward})^{2} \\
    &= - (n_{\leftchild}+n_{\rightchild}) (\hat{\tau}_{\parent}^{\backward})^{2} + n_{\leftchild} (\hat{\tau}_{\leftchild}^{\backward})^{2} + n_{\rightchild} (\hat{\tau}_{\rightchild}^{\backward})^{2} \\
    &= n_{\leftchild} (\hat{\tau}_{\leftchild}^{\backward})^{2} + n_{\rightchild} (\hat{\tau}_{\rightchild}^{\backward})^{2} - (n_{\leftchild}+n_{\rightchild}) \left(\frac{n_{\leftchild} \hat{\tau}_{\leftchild}^{\backward} + n_{\rightchild} \hat{\tau}_{\rightchild}^{\backward}}{n_{\leftchild} + n_{\rightchild}}\right)^{2} \\
    &= \frac{(n_{\leftchild} + n_{\rightchild}) n_{\leftchild} (\hat{\tau}_{\leftchild}^{\backward})^{2} + (n_{\leftchild} + n_{\rightchild}) n_{\rightchild} (\hat{\tau}_{\rightchild}^{\backward})^{2} - n_{\leftchild}^{2} (\hat{\tau}_{\leftchild}^{\backward})^{2} - 2 n_{\leftchild} n_{\rightchild} \hat{\tau}_{\leftchild}^{\backward} \hat{\tau}_{\rightchild}^{\backward} - n_{\rightchild}^{2} (\hat{\tau}_{\rightchild}^{\backward})^{2}}{n_{\leftchild} + n_{\rightchild}} \\
    &= \frac{n_{\leftchild} \cdot n_{\rightchild}}{n_{\leftchild} + n_{\rightchild}} \left(\hat{\tau}_{\leftchild}^{\backward} - \hat{\tau}_{\rightchild}^{\backward}\right)^{2}.
  \end{align*}
  Combining these results, we have
  \begin{multline*}
    \sum_{i\in\indexValidation_{\parent}} \left(\hat{\tau}_{\parent}^{\forward} - \hat{\tau}_{i}^{\validaton}\right)^{2} - \sum_{i\in\indexValidation_{\leftchild}} \left(\hat{\tau}_{\leftchild}^{\forward} - \hat{\tau}_{i}^{\validaton}\right)^{2} - \sum_{i\in\indexValidation_{\rightchild}} \left(\hat{\tau}_{\rightchild}^{\forward} - \hat{\tau}_{i}^{\validaton}\right)^{2} =\\
    \frac{n_{\leftchild} \cdot n_{\rightchild}}{n_{\leftchild} + n_{\rightchild}} \left(\hat{\tau}_{\leftchild}^{\backward} - \hat{\tau}_{\rightchild}^{\backward}\right)^{2} + \frac{n_{\leftchild} + n_{\rightchild}}{N} \left(\hat{\tau}_{\parent}^{\forward} - \hat{\tau}_{\parent}^{\backward}\right)^{2} - \frac{n_{\leftchild}}{N} \left(\hat{\tau}_{\leftchild}^{\forward} - \hat{\tau}_{\leftchild}^{\backward}\right)^{2} - \frac{n_{\rightchild}}{N} \left(\hat{\tau}_{\rightchild}^{\forward} - \hat{\tau}_{\rightchild}^{\backward}\right)^{2}.
  \end{multline*}
\end{proof}

\subsection{Proof of Theorem~\ref{lem:decomposition}}\label{Append:ProofTheoDecomp}
\begin{remark}\label{Rem:unit_interval}
  In the following statements and proofs, we will assume w.l.o.g.\ that the covariate \( X \) is uniformly distributed on \( [0, 1] \), also implying \( R(\node) = [0, 1] \). If its domain would be a different interval \( [a, b] \), replacing it by \( \frac{X-a}{b-a} \)---as well as performing the same transformation for \( \gamma \) and \( k \)---would create an equivalent model with the same values for treatment effect estimates and (signed) splitting criteria.
\end{remark}

\begin{lemma}\label{lem:bias_of_prediction}
  For the univariate change-point model~\eqref{eq:change_point_model} with potential outcomes as in~\eqref{Eq:outcomes_from_model}, assuming Assumption~A3 holds, the bias of the difference-in-means estimate as defined in~\eqref{eq:unbiased} is given by 
\begin{multline}\label{Eq:prediction-diff}
  \left(\expectation{Y^{T = 1} \mid T = 1} - \expectation{Y^{T = 0} \mid T = 0} \right) - \expectation{\tau(X)}  \\ 
  = \gamma (1 - \gamma) q_{1} \frac{\alpha_{1} + (1/2 - q_{0} - \gamma q_{1}) \beta_{1}}{(q_{0} + \gamma q_{1}) (1 - q_{0} - \gamma q_{1})}.
\end{multline}
\end{lemma}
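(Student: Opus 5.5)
The plan is a direct computation on $[0,1]$ (Remark~\ref{Rem:unit_interval}). Write $\pi := P(T=1) = \expectation{p(X)} = q_0 + \gamma q_1$, which lies in $(0,1)$ by Assumption~A3. Partition the covariate space into $A = \{X \le \gamma\}$ and $A^c$.

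\textbf{Conditional weights.} By Bayes' rule,
\[
  P(A \mid T=1) = \frac{\gamma (q_0+q_1)}{\pi},
  \qquad
  P(A \mid T=0) = \frac{\gamma (1-q_0-q_1)}{1-\pi}.
\]
The noise terms are mean zero and independent of $X$ and $T$, since $T$ depends only on $X$. Hence
\[
  \expectation{Y^{T=1} \mid T=1} = \expectation{\mu(X) + \tfrac12 \tau(X) \mid T=1},
\]
and analogously for the control arm. Every function here is constant on $A$ and on $A^c$, so for $h(x) = c_0 + c_1 \indicator{x\le\gamma}$ we get $\expectation{h(X)\mid T=a} = c_0 + c_1 P(A \mid T=a)$. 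Using $\expectation{\tau(X)} = \beta_0 + \gamma\beta_1$, the constants $\alpha_0$ and $\beta_0$ cancel (the latter because $\tfrac12\beta_0 + \tfrac12\beta_0 - \beta_0 = 0$). The bias therefore reduces to
\[
  \alpha_1 \bigl(P(A\mid T=1) - P(A\mid T=0)\bigr)
  + \beta_1 \Bigl(\tfrac12 P(A\mid T=1) + \tfrac12 P(A\mid T=0) - \gamma\Bigr).
\]

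\textbf{Simplifying the two brackets.} Both are computed over the common denominator $\pi(1-\pi)$.

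For the first bracket, the numerator is
\[
  \gamma\bigl[(q_0+q_1)(1-\pi) - (1-q_0-q_1)\pi\bigr] = \gamma(q_0+q_1-\pi) = \gamma(1-\gamma) q_1 .
\]

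For the second bracket, the numerator is
\[
  \tfrac{\gamma}{2}\bigl[(q_0+q_1)(1-\pi) + (1-q_0-q_1)\pi\bigr] - \gamma\pi(1-\pi).
\]
Writing $q_0+q_1 = \pi + (1-\gamma)q_1$, the term in brackets becomes $2\pi(1-\pi) + (1-\gamma)q_1(1-2\pi)$. The second numerator therefore simplifies to $\gamma(1-\gamma) q_1 \bigl(\tfrac12 - \pi\bigr)$.

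\textbf{Conclusion.} Factoring out $\gamma(1-\gamma)q_1/\bigl(\pi(1-\pi)\bigr)$ and substituting $\pi = q_0 + \gamma q_1$ gives exactly~\eqref{Eq:prediction-diff}.

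The only step requiring care is the algebra of the $\beta_1$ bracket, specifically the substitution $q_0+q_1 = \pi + (1-\gamma)q_1$, which makes the cancellation of $\gamma\pi(1-\pi)$ transparent. Everything else is bookkeeping.
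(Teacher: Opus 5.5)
Your computation is correct and is essentially the paper's proof. Both obtain the arm-wise conditional means from the weights $P(X\le\gamma\mid T=a)$ and simplify over the common denominator $(q_0+\gamma q_1)(1-q_0-\gamma q_1)$; the paper just groups terms by $(\alpha_1-\beta_1/2)$ and $\beta_1$ instead of by $\alpha_1$ and $\beta_1$, and, like you, it takes the noise to have conditional mean zero given $T$.
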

\begin{proof}
  Because of Assumption~A3, the conditional expectations are well defined and we have
  \begin{align*}
    \expectation{Y^{T=0} \mid T=0} &= \alpha_{0} - \beta_{0}/2 + \frac{(\alpha_{1} - \beta_{1}/2) \gamma (1 - q_{0} - q_{1})}{1 - q_{0} - \gamma q_{1}}, \\
    \expectation{Y^{T=1} \mid T=1} &= \alpha_{0} + \beta_{0}/2 + \frac{(\alpha_{1} + \beta_{1}/2) \gamma (q_{0} + q_{1})}{q_{0} + \gamma q_{1}}
  \end{align*}
  and hence, 
  \begin{align*}
   &\expectation{Y^{T = 1} \mid T = 1} - \expectation{Y^{T = 0} \mid T = 0}\\
   = &\beta_{0} + \frac{(\alpha_{1} + \beta_{1}/2) \gamma (q_{0} + q_{1})}{q_{0} + \gamma q_{1}} - \frac{(\alpha_{1} - \beta_{1}/2) \gamma (1 - q_{0} - q_{1})}{1 - q_{0} - \gamma q_{1}} \\
    =& \beta_{0} + \gamma (\alpha_{1} - \beta_{1}/2) \frac{(1 - \gamma) q_{1}}{(q_{0} + \gamma q_{1}) (1 - q_{0} - \gamma q_{1})} + \gamma \beta_{1} \left(1 + \frac{(1 - \gamma) q_{1}}{q_{0} + \gamma q_{1}}\right).
\end{align*}
Moreover, we have \( \expectation{\tau(X)} = \beta_{0} + \gamma \beta_{1} \) and hence,
\begin{align*}
    &\left(\expectation{Y^{T = 1} \mid T = 1} - \expectation{Y^{T = 0} \mid T = 0} \right) - \expectation{\tau(X)}  \\
    =& (\alpha_{1} - \beta_{1}/2) \frac{\gamma (1 - \gamma) q_{1}}{(q_{0} + \gamma q_{1}) (1 - q_{0} - \gamma q_{1})} + \beta_{1} \frac{\gamma (1 - \gamma) q_{1}}{q_{0} + \gamma q_{1}} \\
    =& \gamma (1 - \gamma) q_{1} \frac{\alpha_{1} + (1/2 - q_{0} - \gamma q_{1}) \beta_{1}}{(q_{0} + \gamma q_{1}) (1 - q_{0} - \gamma q_{1})}. \qedhere
  \end{align*}
\end{proof}

\begin{proof}[Proof for Theorem~\ref{lem:decomposition}]
  Define
  \begin{align*}
    \alpha_0^{\prime} &:= \alpha_0, & \alpha_1^{\prime} &:= \alpha_1 - \alpha_1^{\prime\prime}, & \beta_0^{\prime} &:= \beta_0, \\
    \alpha_0^{\prime\prime}&:=0, & \alpha_{1}^{\prime\prime} &:= -(1/2 - q_{0} - \gamma q_{1}) \beta_{1}, & \beta_0^{\prime\prime} &:= 0, & \beta_{1}^{\prime\prime} &:= \beta_{1}.
  \end{align*}
  With these definitions, \( \mu(x) = \mu^B(x) + \mu^H(x) \) and \( \tau(x) = \tau^B(x) + \tau^H(x) \) follows directly.

  For the second model, we consider the difference between the difference-in-means prediction \( \expectation{\tau^H(X)} = \expectation{\mu^H(X) + \frac{1}{2}\tau^H(X) \mid T = 1 } - \expectation{\mu^H(X) - \frac{1}{2}\tau^H(X) \mid T = 0 } \) and the expectation of the treatment effect \( \expectation{\tau^H(X)} \). By Lemma~\ref{lem:bias_of_prediction} this is given by
  \[ \gamma (1 - \gamma) q_{1} \frac{\alpha_{1}^{\prime\prime} + (1/2 - q_{0} - \gamma q_{1}) \beta_{1}^{\prime\prime}}{(q_{0} + \gamma q_{1}) (1 - q_{0} - \gamma q_{1})}=0, \]
  where for the evaluation we used \( \alpha_{1}^{\prime\prime} = -(1/2 - q_{0} - \gamma q_{1}) \beta_{1} \) and \( \beta_{1}^{\prime\prime} = \beta_{1} \). So the difference-in-means estimator is in expectation equal to the treatment effect, i.e., is unbiased.
\end{proof}

\subsection{Proof of Theorem~\ref{theo:decompMax}}\label{Append:ProofTheoDecompMax}
\begin{remark}
  We will continue assuming \( X \sim U([0, 1]) \), see Remark~\ref{Rem:unit_interval}.

  Further we will assume that \( \gamma \in (0, 1) \) in~\eqref{eq:change_point_model}. For values outside \( (0, 1) \), there is a model with \( \beta_1 = q_1 = \alpha_1 = 0 \) and \( \gamma \in (0, 1) \) which has almost surely the same propensities and expected outcomes, which therefore also result in the same expected predictions and (signed) splitting criteria.
\end{remark}

\begin{lemma}\label{lem:population_predictions}
For the univariate change-point model~\eqref{eq:change_point_model}, assuming Assumption~A3 holds, with potential outcomes as in~\eqref{Eq:outcomes_from_model} and a split point \( k \in (0, 1) \)
define
\begin{equation}\label{Eq:tauInf}
\begin{aligned}
        &\hat{\tau}^{\infty}_{\leftchild} := \expectation{Y^{T=1} \mid X \leq k, T=1}  - \expectation{Y^{T=0} \mid X \leq k, T=0},\\
        &    \hat{\tau}^{\infty}_{\rightchild} := \expectation{Y^{T=1} \mid X > k, T=1}  - \expectation{Y^{T=0} \mid X > k, T=0} 
\end{aligned}
\end{equation}
Then we have
  \[ \hat{\tau}^{\infty}_{\leftchild} = \begin{cases}
    \beta_{0} + \beta_{1} & k \leq \gamma,\\
    \beta_{0} + (\alpha_{1} - \beta_{1}/2) \frac{\gamma (k - \gamma) q_{1}}{(k q_{0} + \gamma q_{1}) (k - k q_{0} - \gamma q_{1})} + \beta_{1} \left(1 - \frac{(k - \gamma) q_{0}}{k q_{0} + \gamma q_{1}}\right) & k > \gamma
  \end{cases} \]
and
  \[ \hat{\tau}^{\infty}_{\rightchild} = \begin{cases}
    \begin{aligned}
        \beta_{0} &+ (\alpha_{1} - \beta_{1}/2) \frac{(\gamma - k) (1 - \gamma) q_{1}}{((1 - k) q_{0} + (\gamma - k) q_{1}) ((1 - k) (1 - q_{0}) - (\gamma - k) q_{1})} \\
        &+ \beta_{1} \left(1 - \frac{(1 - \gamma) q_{0}}{(1 - k) q_{0} + (\gamma - k) q_{1}}\right)
    \end{aligned} & k \leq \gamma,\\
    \beta_{0} & k > \gamma.
  \end{cases} \]
\end{lemma}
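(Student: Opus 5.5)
The plan is to reduce the statement to Lemma~\ref{lem:bias_of_prediction} applied to a restricted model, so that no new integration is needed. Conditioning on $X \leq k$ (resp.\ $X > k$) turns $X$ into a uniform variable on $[0,k]$ (resp.\ $(k,1]$). By Remark~\ref{Rem:unit_interval}, rescaling to the unit interval yields again a univariate change point model with the same functions $\mu,\tau,p$, only with a transformed change point. Assumption~A3 is inherited on the subinterval, so all conditional expectations are well defined.

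\emph{Trivial cases first.} For $k \leq \gamma$, on the left child $\{X\leq k\}$ all indicators $\indicator{x\leq\gamma}$ equal one. Hence $\mu\equiv\alpha_0+\alpha_1$, $\tau\equiv\beta_0+\beta_1$ and $p\equiv q_0+q_1$ are constant there, and $X$ is independent of $T$ within the node. Consequently $\expectation{Y^{T=a}\mid X\le k,T=a}=\alpha_0+\alpha_1\pm\tfrac12(\beta_0+\beta_1)$, using that $\noise_a$ is mean zero and independent of $X$ and hence of $T$ given $X$. The difference is $\beta_0+\beta_1$. Symmetrically, for $k>\gamma$ the right child sees only the constant pieces $\alpha_0$, $\beta_0$, $q_0$, giving $\beta_0$.

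\emph{Nontrivial left child, $k>\gamma$.} Let $U=X/k\sim U([0,1])$. The model becomes a change point model with change point $\gamma' = \gamma/k$ and unchanged coefficients. Lemma~\ref{lem:bias_of_prediction} gives
\[
\hat\tau^\infty_\leftchild = \beta_0+\gamma'\beta_1 + \gamma'(1-\gamma')q_1\frac{\alpha_1+(1/2-q_0-\gamma' q_1)\beta_1}{(q_0+\gamma' q_1)(1-q_0-\gamma' q_1)} .
\]
Multiplying numerator and denominator by $k^2$ turns this into
\[
\gamma(k-\gamma)q_1\,\frac{\alpha_1+(1/2-q_0-\gamma'q_1)\beta_1}{(kq_0+\gamma q_1)(k-kq_0-\gamma q_1)} .
\]
It then remains to regroup the $\beta_1$ terms. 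Write $(1/2-q_0-\gamma'q_1)\beta_1 = -\tfrac12\beta_1 + (1-q_0-\gamma'q_1)\beta_1$, so the $\alpha_1$ part combines into $(\alpha_1-\beta_1/2)\frac{\gamma(k-\gamma)q_1}{(kq_0+\gamma q_1)(k-kq_0-\gamma q_1)}$. The leftover is
\[
\beta_1\left[\frac{\gamma}{k}+\frac{\gamma(k-\gamma)q_1}{k(kq_0+\gamma q_1)}\right].
\]
Putting this over the common denominator gives $\beta_1\frac{\gamma(q_0+q_1)}{kq_0+\gamma q_1}$. This equals $\beta_1\bigl(1-\frac{(k-\gamma)q_0}{kq_0+\gamma q_1}\bigr)$, as claimed.

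\emph{Nontrivial right child, $k\le\gamma$.} Let $U=(X-k)/(1-k)$. The change point becomes $\gamma''=(\gamma-k)/(1-k)$, again with unchanged coefficients. The same substitution into Lemma~\ref{lem:bias_of_prediction} is followed by multiplying through by $(1-k)^2$. The denominators become $((1-k)q_0+(\gamma-k)q_1)$ and $((1-k)(1-q_0)-(\gamma-k)q_1)$, and the prefactor becomes $(\gamma-k)(1-\gamma)q_1$. The same $\beta_1$ regrouping yields
\[
\beta_1\frac{(\gamma-k)(q_0+q_1)}{(1-k)q_0+(\gamma-k)q_1} = \beta_1\Bigl(1-\frac{(1-\gamma)q_0}{(1-k)q_0+(\gamma-k)q_1}\Bigr).
\]

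The main obstacle is only bookkeeping. One must justify that the rescaled restricted model satisfies the hypotheses of Lemma~\ref{lem:bias_of_prediction}, including the boundary cases $\gamma'\in(0,1]$ and $k=\gamma$, where the formulas coincide with the trivial case. Beyond that, the algebraic regrouping of the $\beta_1$ terms into the stated closed forms has to be carried through without error. As a safeguard, I would also verify it directly via $\expectation{Y^{T=1}\mid X\le k,T=1}=\frac{\int_0^k p\,(\mu+\tau/2)}{\int_0^k p}$, and analogously with $1-p$ for the control arm.
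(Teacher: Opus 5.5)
Your proof is correct, but it takes a different route from the paper. The paper computes everything directly. In each case it writes down the four arm-wise conditional means and subtracts them, leaving the final regrouping implicit. For instance, for $k>\gamma$ it uses $P(T=1, X\le k)=kq_0+\gamma q_1$ and $P(T=0,X\le k)=k-kq_0-\gamma q_1$ to get $\expectation{Y^{T=1} \mid X \leq k, T=1} = \alpha_0+\beta_0/2+(\alpha_1+\beta_1/2)\gamma(q_0+q_1)/(kq_0+\gamma q_1)$.

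You instead note that conditioning on a child and rescaling as in Remark~\ref{Rem:unit_interval} gives again a change point model. It has the same coefficients and change point $\gamma/k$ (resp.\ $(\gamma-k)/(1-k)$). Lemma~\ref{lem:bias_of_prediction}, together with the mean of $\tau$ on the child, then yields $\hat{\tau}^{\infty}$ without new integration. Your rescaling by $k^2$ (resp.\ $(1-k)^2$) and the $\beta_1$ regrouping both reproduce the stated closed forms exactly. This includes $k=\gamma$ on the right child: there the rescaled change point is $0$, and Lemma~\ref{lem:bias_of_prediction}, whose proof never uses $\gamma\in(0,1)$, returns $\beta_0$.

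What each route buys:
\begin{itemize}
\item Yours reuses a verified computation and explains the shape of the formulas: each child estimate is the node-level difference-in-means formula with a rescaled change point, so the within-child bias has the same form as the root-level bias.
\item The paper's is self-contained. It needs no check that the restricted, rescaled model meets the earlier lemma's hypotheses (unchanged propensity function, inherited positivity), and it gives the arm-wise conditional means as by-products.
\end{itemize}

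One small slip in your trivial case: independence of $\noise_a$ from $X$ does not imply independence from $T$ given $X$. What you actually need is $\expectation{\noise_a \mid X, T}=0$. That is the tacit no-hidden-confounding assumption of the model, used equally by the paper and by Lemma~\ref{lem:bias_of_prediction}. State it as an assumption rather than deriving it.
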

\begin{proof}
For a splitting point \( k \), we have
\begin{align*}
  \expectation{Y^{T=0} \mid X \leq k, T=0} &= \begin{cases}
    \alpha_{0} - \beta_{0}/2 + \alpha_{1} - \beta_{1}/2 & k \leq \gamma,\\
    \alpha_{0} - \beta_{0}/2 + \frac{(\alpha_{1} - \beta_{1}/2) \gamma (1 - q_{0} - q_{1})}{k - k q_{0} - \gamma q_{1}} & k > \gamma,
  \end{cases} \\
  \expectation{Y^{T=1} \mid X \leq k, T=1} &= \begin{cases}
    \alpha_{0} + \beta_{0}/2 + \alpha_{1} + \beta_{1}/2 & k \leq \gamma,\\
    \alpha_{0} + \beta_{0}/2 + \frac{(\alpha_{1} + \beta_{1}/2) \gamma (q_{0} + q_{1})}{k q_{0} + \gamma q_{1}} & k > \gamma,
  \end{cases}
\end{align*}
and hence,
\[
  \hat{\tau}^{\infty}_{\leftchild} = \begin{cases}
    \beta_{0} + \beta_{1} & k \leq \gamma,\\
    \beta_{0} + (\alpha_{1} - \beta_{1}/2) \frac{\gamma (k - \gamma) q_{1}}{(k q_{0} + \gamma q_{1}) (k - k q_{0} - \gamma q_{1})} + \beta_{1} \left(1 - \frac{(k - \gamma) q_{0}}{k q_{0} + \gamma q_{1}}\right) & k > \gamma.
  \end{cases}
\]
Moreover, we have
\begin{align*}
  \expectation{Y^{T=0} \mid X > k, T=0} &= \begin{cases}
    \alpha_{0} - \beta_{0}/2 + \frac{(\alpha_{1} - \beta_{1}/2) (\gamma - k) (1 - q_{0} - q_{1})}{(1-k)(1-q_{0}) - (\gamma-k)q_{1}} & k \leq \gamma,\\
    \alpha_{0} - \beta_{0}/2 & k > \gamma,
  \end{cases}\\
  \expectation{Y^{T=1} \mid X > k, T=1} &= \begin{cases}
    \alpha_{0} + \beta_{0}/2 + \frac{(\alpha_{1} + \beta_{1}/2) (\gamma - k) (q_{0} + q_{1})}{(1-k) q_{0} + (\gamma-k) q_{1}} & k \leq \gamma,\\
    \alpha_{0} + \beta_{0}/2 & k > \gamma,
  \end{cases}
\end{align*}
and hence,
\[
  \hat{\tau}^{\infty}_{\rightchild} = \begin{cases}
    \begin{aligned}
        \beta_{0} &+ (\alpha_{1} - \beta_{1}/2) \frac{(\gamma - k) (1 - \gamma) q_{1}}{((1 - k) q_{0} + (\gamma - k) q_{1}) ((1 - k) (1 - q_{0}) - (\gamma - k) q_{1})} \\
        &+ \beta_{1} \left(1 - \frac{(1 - \gamma) q_{0}}{(1 - k) q_{0} + (\gamma - k) q_{1}}\right)
    \end{aligned} & k \leq \gamma,\\
    \beta_{0} & k > \gamma.
  \end{cases}
\]
Note that all terms in the denominators are either treatment propensities for some subset of samples, scaled versions of these, or a product of multiple such propensities. By Assumption~A3, these are positive, so all terms are defined.
\end{proof}

\begin{proof}[Proof of Theorem~\ref{theo:decompMax}]
Use the notation from~\eqref{Eq:tauInf} and
\begin{multline*}
     \hat{\tau}^{\infty}_\parent := \expectation{Y^{T=1} \mid  T=1}  - \expectation{Y^{T=0} \mid  T=0} \\= \beta_{0} + \gamma (\alpha_{1} - \beta_{1}/2) \frac{(1 - \gamma) q_{1}}{(q_{0} + \gamma q_{1}) (1 - q_{0} - \gamma q_{1})} + \gamma \beta_{1} \left(1 + \frac{(1 - \gamma) q_{1}}{q_{0} + \gamma q_{1}}\right),
\end{multline*}
as calculated in Lemma~\ref{lem:population_predictions}. Note that we can write
\begin{align*}
  \signedHet(k) &= \sqrt{k (1-k)} \cdot (\hat{\tau}^{\infty}_{\leftchild} - \hat{\tau}^{\infty}_{\rightchild}), \\
  \signedBias(k) &= \hat{\tau}^{\infty}_{\parent} - k \hat{\tau}^{\infty}_{\leftchild} - (1-k) \hat{\tau}^{\infty}_{\rightchild}.
\end{align*}

Further note, that \( \alpha_0 \) does not appear in any of \( \hat{\tau}^{\infty}_{\parent}, \hat{\tau}^{\infty}_{\leftchild}, \hat{\tau}^{\infty}_{\rightchild} \), so it does not have any effect on \( \signedHet(k) \) or \( \signedBias(k) \). Additionally, \( \beta_0 \) appears in all three treatment effect estimates, but always as simple summand. So changing it does not have any effect on the differences \( \hat{\tau}^{\infty}_{\leftchild} - \hat{\tau}^{\infty}_{\rightchild} \) and \( \hat{\tau}^{\infty}_{\parent} - k \hat{\tau}^{\infty}_{\leftchild} - (1-k) \hat{\tau}^{\infty}_{\rightchild} \), which define \( \signedHet(k) \) and \( \signedBias(k) \). To simplify notation, we will therefore, without loss of generality, set all of \( \alpha_0, \alpha_0^{\prime}, \alpha_0^{\prime\prime}, \beta_0, \beta_0^{\prime}, \beta_0^{\prime\prime} \) zero for the following analysis of the signed splitting criteria.

We proof the two statements (for bias and for heterogeneity) of the theorem separately.

\paragraph{Bias}
After setting \( \alpha_0^{\prime} = \beta_0^{\prime} = 0 \), as explained above, the bias model is given by
\begin{align*}
    \mu^B(x) &= \alpha_{1}^{\prime} \cdot \indicator{x \leq \gamma}, \\
    \tau^B(x) &= 0, \\
    p(x) &= q_{0} + q_{1} \cdot \indicator{x \leq \gamma}.
\end{align*}

In this model, the estimated treatment effect tends to 
\[ \hat{\tau}^{\infty}_{\parent} = \gamma \alpha_{1}^{\prime} \frac{(1 - \gamma) q_{1}}{(q_{0} + \gamma q_{1}) (1 - q_{0} - \gamma q_{1})}, \] while the true treatment effect is constant zero: \( \tau^B = 0 \).\\
In the child nodes for a split at \( k \), the estimates are
\begin{align*}
  \hat{\tau}^{\infty}_{\leftchild} &= \begin{cases}
    0 & k \leq \gamma,\\
    \alpha_{1}^{\prime} \frac{\gamma (k - \gamma) q_{1}}{(k q_{0} + \gamma q_{1}) (k - k q_{0} - \gamma q_{1})} & k > \gamma,
  \end{cases} \\
  \hat{\tau}^{\infty}_{\rightchild} &= \begin{cases}
    \alpha_{1}^{\prime} \frac{(\gamma - k) (1 - \gamma) q_{1}}{((1 - k) q_{0} + (\gamma - k) q_{1}) ((1 - k) (1 - q_{0}) - (\gamma - k) q_{1})} & k \leq \gamma,\\
    0 & k > \gamma.
  \end{cases}
\end{align*}
Therefore, the population signed splitting criteria are given by
\begin{align*}
    \signedHet_B(k) &= \sqrt{k (1-k)} (\hat{\tau}^{\infty}_{\leftchild} - \hat{\tau}^{\infty}_{\rightchild}), \\
    &= \begin{cases}
        \sqrt{k (1-k)} \left(0 - \alpha_{1}^{\prime} \frac{(\gamma - k) (1 - \gamma) q_{1}}{((1 - k) q_{0} + (\gamma - k) q_{1}) ((1 - k) (1 - q_{0}) - (\gamma - k) q_{1})}\right) & k \leq \gamma, \\
        \sqrt{k (1-k)} \left(\alpha_{1}^{\prime} \frac{\gamma (k - \gamma) q_{1}}{(k q_{0} + \gamma q_{1}) (k - k q_{0} - \gamma q_{1})} - 0\right) & k > \gamma,
    \end{cases} \\
    &= \begin{cases}
        -\sqrt{k (1-k)} \alpha_{1}^{\prime} \frac{(\gamma - k) (1 - \gamma) q_{1}}{((1 - k) q_{0} + (\gamma - k) q_{1}) ((1 - k) (1 - q_{0}) - (\gamma - k) q_{1})} & k \leq \gamma, \\
        \sqrt{k (1-k)} \alpha_{1}^{\prime} \frac{(k - \gamma) \gamma q_{1}}{(k q_{0} + \gamma q_{1}) (k - k q_{0} - \gamma q_{1})} & k > \gamma,
    \end{cases} \\[3mm]
    \signedBias_B(k) &= \hat{\tau}^{\infty}_{\parent} - k \hat{\tau}^{\infty}_{\leftchild} - (1-k) \hat{\tau}^{\infty}_{\rightchild} \\
    &= \begin{cases}
        \scriptstyle \gamma \alpha_{1}^{\prime} \frac{(1 - \gamma) q_{1}}{(q_{0} + \gamma q_{1}) (1 - q_{0} - \gamma q_{1})} - k \cdot 0 - (1-k) \alpha_{1}^{\prime} \frac{(\gamma - k) (1 - \gamma) q_{1}}{((1 - k) q_{0} + (\gamma - k) q_{1}) ((1 - k) (1 - q_{0}) - (\gamma - k) q_{1})} & k \leq \gamma,\\
        \gamma \alpha_{1}^{\prime} \frac{(1 - \gamma) q_{1}}{(q_{0} + \gamma q_{1}) (1 - q_{0} - \gamma q_{1})} - k \alpha_{1}^{\prime} \frac{\gamma (k - \gamma) q_{1}}{(k q_{0} + \gamma q_{1}) (k - k q_{0} - \gamma q_{1})} - (1-k) \cdot 0 & k > \gamma,
    \end{cases} \\
    &= \begin{cases}
        \scriptstyle \alpha_{1}^{\prime} \left(\frac{\gamma (1 - \gamma) q_{1}}{(q_{0} + \gamma q_{1}) (1 - q_{0} - \gamma q_{1})} - (1-k) \frac{(\gamma - k) (1 - \gamma) q_{1}}{((1 - k) q_{0} + (\gamma - k) q_{1}) ((1 - k) (1 - q_{0}) - (\gamma - k) q_{1})}\right) & k \leq \gamma,\\
        \alpha_{1}^{\prime} \left(\frac{\gamma (1 - \gamma) q_{1}}{(q_{0} + \gamma q_{1}) (1 - q_{0} - \gamma q_{1})} - k \frac{\gamma (k - \gamma) q_{1}}{(k q_{0} + \gamma q_{1}) (k - k q_{0} - \gamma q_{1})}\right) & k > \gamma,
    \end{cases}
\end{align*}
Note that $\signedHet_B(k)$ and $\signedBias_B(k)$ are both continuous and at \( k = \gamma \) we have
\begin{align*}
    \signedHet_B(\gamma) &= 0 \\
    \signedBias_B(\gamma) 
    &= \alpha_{1}^{\prime} \frac{\gamma (1 - \gamma) q_{1}}{(q_{0} + \gamma q_{1}) (1 - q_{0} - \gamma q_{1})}.
\end{align*}

Recall from Theorem~\ref{lem:decomposition} that \( \mu(x) = \mu^B(x) + \mu^H(x) \) and \( \tau(x) = \tau^B(x) + \tau^H(x) \) and hence,
\begin{align*}
  \expectation{\tau(X)} &= \expectation{\tau^B(X)} + \expectation{\tau^H(X)}, \\
  \expectation{\mu(X) \! + \! \tfrac{1}{2}\tau(X) \mid T \! = \! 1 } &= \expectation{\mu^B(X) \! + \! \tfrac{1}{2}\tau^B(X) \mid T \! = \! 1 } + \expectation{\mu^H(X) \! + \! \tfrac{1}{2}\tau^H(X) \mid T \! = \! 1 }, \\
  \expectation{\mu(X) \! - \! \tfrac{1}{2}\tau(X) \mid T \! = \! 0 } &= \expectation{\mu^B(X) \! - \! \tfrac{1}{2}\tau^B(X) \mid T \! = \! 0 } + \expectation{\mu^H(X) \! - \! \tfrac{1}{2}\tau^H(X) \mid T \! = \! 0 }.
\end{align*}
From~\eqref{eq:noBias} it follows that
\begin{multline*}
  \expectation{\mu(X) + \frac{1}{2}\tau(X) \mid T = 1 } - \expectation{\mu(X) - \frac{1}{2}\tau(X) \mid T = 0 } - \expectation{\tau(X)} \\
  = \expectation{\mu^B(X) + \frac{1}{2}\tau^B(X) \mid T = 1 } - \expectation{\mu^B(X) - \frac{1}{2}\tau^B(X) \mid T = 0 } - \expectation{\tau^B(X)}.
\end{multline*}
By Lemma~\ref{lem:bias_of_prediction} the right-hand side of this equation is equal to \( \signedBias_B(\gamma) \).
Because we assume that the difference-in-means estimator of the original model is biased, i.e., \( \expectation{\mu(X) + \frac{1}{2}\tau(X) \mid T = 1 } - \expectation{\mu(X) - \frac{1}{2}\tau(X) \mid T = 0 } - \expectation{\tau(X)} \neq 0 \), it follows that \( \signedBias_B(\gamma) \neq 0 \) along with \( \alpha_{1}^{\prime} \neq 0 \) and \( q_{1} \neq 0 \).
Moreover, as \( \signedBias_B(0) = \signedBias_B(1) = 0 \), it follows that \( \signedBias_B \) is non-constant and we will show that it has a global extremum at \( k = \gamma \). 

Considering the derivative of \( \signedBias_B \) for \( k > \gamma \) yields
\begin{align*}
    &\frac{\mathrm d}{\mathrm dk} \alpha_{1}^{\prime} \left(\frac{\gamma (1 - \gamma) q_{1}}{(q_{0} + \gamma q_{1}) (1 - q_{0} - \gamma q_{1})} - k \frac{\gamma (k - \gamma) q_{1}}{(k q_{0} + \gamma q_{1}) (k - k q_{0} - \gamma q_{1})}\right) \\
    &= -\alpha_{1}^{\prime} \left(\frac{(\gamma (k - \gamma) q_{1} + k \gamma q_{1}) (k q_{0} + \gamma q_{1}) (k - k q_{0} - \gamma q_{1})}{(k q_{0} + \gamma q_{1})^{2} (k - k q_{0} - \gamma q_{1})^{2}}\right. \\
    &\phantom{=} \left.- \frac{k \gamma (k - \gamma) q_{1} (q_{0} (k - k q_{0} - \gamma q_{1}) + (k q_{0} + \gamma q_{1}) (1 - q_{0}))}{(k q_{0} + \gamma q_{1})^{2} (k - k q_{0} - \gamma q_{1})^{2}}\right) \\
    &= -\alpha_{1}^{\prime} \gamma^{2} q_{1} \frac{(q_{0}^{2} + 2 q_{0} q_{1} - q_{0} - q_{1}) k^{2} + 2 \gamma q_{1}^{2} k - \gamma^{2} q_{1}^{2}}{(k q_{0} + \gamma q_{1})^{2} (k - k q_{0} - \gamma q_{1})^{2}}.
\end{align*}

Note that $\signedBias_B$ is contentiously differentiable in $(\gamma, 1)$. Hence, for a potential extremum \( k \) of $\signedBias_B$ in $(\gamma, 1)$, we would need
\begin{align*}
    0 &= -\alpha_{1}^{\prime} \gamma^{2} q_{1} \frac{(q_{0}^{2} + 2 q_{0} q_{1} - q_{0} - q_{1}) k^{2} + 2 \gamma q_{1}^{2} k - \gamma^{2} q_{1}^{2}}{(k q_{0} + \gamma q_{1})^{2} (k - k q_{0} - \gamma q_{1})^{2}} \\
    \iff 0 &= (q_{0}^{2} + 2 q_{0} q_{1} - q_{0} - q_{1}) k^{2} + 2 \gamma q_{1}^{2} k - \gamma^{2} q_{1}^{2} \\
    \iff k &= \frac{- 2 \gamma q_{1}^{2} \pm \sqrt{(2 \gamma q_{1}^{2})^{2} - 4 (q_{0}^{2} + 2 q_{0} q_{1} - q_{0} - q_{1}) (- \gamma^{2} q_{1}^{2})}}{2 (q_{0}^{2} + 2 q_{0} q_{1} - q_{0} - q_{1})},
    \intertext{but the discriminant is}
    &(2 \gamma q_{1}^{2})^{2} - 4 (q_{0}^{2} + 2 q_{0} q_{1} - q_{0} - q_{1}) (- \gamma^{2} q_{1}^{2}) \\
    &= 4 \gamma^{2} q_{1}^{4} + 4 \gamma^{2} q_{0}^{2} q_{1}^{2} + 8 \gamma^{2} q_{0} q_{1}^{3} - 4 \gamma^{2} q_{0} q_{1}^{2} - 4 \gamma^{2} q_{1}^{3} \\
    &= -4 \gamma^{2} q_{1}^{2} (q_{0} + q_{1}) (1 - q_{0} - q_{1}) < 0.
\end{align*}
Therefore, there are no local extrema for \( k \in (\gamma, 1) \). Similarly, there are also no local extrema within \( (0, \gamma) \). As \( \signedBias_B \) is continuous and non-constant with \( \signedBias_B(k) = 0 \) at \( k=0 \) and \( k=1 \), \( \signedBias_B(\gamma) \) must be a global extremum.

\paragraph{Heterogeneity}

After setting \( \alpha_0^{\prime\prime} = \beta_0^{\prime\prime} = 0 \), as explained above, the heterogeneity model is given by
\begin{align*}
    \mu^H(x) &= \alpha_{1}^{\prime\prime} \cdot \indicator{x \leq \gamma}, \\
    \tau^H(x) &= \beta_{1}^{\prime\prime} \cdot \indicator{x \leq \gamma}, \\
    p(x) &= q_{0} + q_{1} \cdot \indicator{x \leq \gamma},
\end{align*}
with \( \alpha_{1}^{\prime\prime} = -(1/2 - q_{0} - \gamma q_{1}) \beta_{1}^{\prime\prime} \), such that the bias calculated in Lemma~\ref{lem:bias_of_prediction} is 0.

For this model, the treatment effect obtained from the difference-in-means estimator tends to the correct average treatment effect, i.e., \( \hat{\tau}^{\infty}_{\parent} = \expectation{\tau(X)} = \gamma \beta_{1}^{\prime\prime} \) and the estimates in the child nodes converge to
\begin{align*}
  \hat{\tau}^{\infty}_{\leftchild} &= \begin{cases}
    \beta_{1}^{\prime\prime} & k \leq \gamma,\\
    \beta_{1}^{\prime\prime} \left(1 - (k - \gamma) \frac{(k q_{0} + \gamma q_{1}) (1 - q_{0}) - (q_{0} + \gamma q_{1}) \gamma q_{1}}{(k q_{0} + \gamma q_{1}) (k - k q_{0} - \gamma q_{1})}\right) & k > \gamma,
  \end{cases} \\
  \hat{\tau}^{\infty}_{\rightchild} &= \begin{cases}
    \beta_{1}^{\prime\prime} (\gamma - k) \frac{(q_{0} + \gamma q_{1}) (1 - q_{0} - \gamma q_{1}) - k (q_{0} + q_{1}) (1 - q_{0} - q_{1})}{((1 - k) q_{0} + (\gamma - k) q_{1}) ((1 - k) (1 - q_{0}) - (\gamma - k) q_{1})} & k \leq \gamma,\\
    0 & k > \gamma,
  \end{cases}
\end{align*}
which can be calculated by using \( \alpha_{1}^{\prime\prime} = -(1/2 - q_{0} - \gamma q_{1}) \beta_{1}^{\prime\prime} \) in the formulas given by Lemma~\ref{lem:population_predictions}.
The population signed splitting criteria evaluate to
\begin{align*}
    \signedHet_H(k)\! &= \sqrt{k (1-k)} (\hat{\tau}^{\infty}_{\leftchild} - \hat{\tau}^{\infty}_{\rightchild}), \\
    &= \begin{cases}
        \sqrt{k (1-k)} \left(\beta_{1}^{\prime\prime} - \beta_{1}^{\prime\prime} (\gamma - k) \frac{(q_{0} + \gamma q_{1}) (1 - q_{0} - \gamma q_{1}) - k (q_{0} + q_{1}) (1 - q_{0} - q_{1})}{((1 - k) q_{0} + (\gamma - k) q_{1}) ((1 - k) (1 - q_{0}) - (\gamma - k) q_{1})}\right) & k \leq \gamma,\\
        \sqrt{k (1-k)} \left(\beta_{1}^{\prime\prime} \left(1 - (k - \gamma) \frac{(k q_{0} + \gamma q_{1}) (1 - q_{0}) - (q_{0} + \gamma q_{1}) \gamma q_{1}}{(k q_{0} + \gamma q_{1}) (k - k q_{0} - \gamma q_{1})}\right) - 0\right) & k > \gamma,
    \end{cases} \\
    &= \begin{cases}
        \beta_{1}^{\prime\prime} \sqrt{k (1-k)} \left(1 - (\gamma - k) \frac{(q_{0} + \gamma q_{1}) (1 - q_{0} - \gamma q_{1}) - k (q_{0} + q_{1}) (1 - q_{0} - q_{1})}{((1 - k) q_{0} + (\gamma - k) q_{1}) ((1 - k) (1 - q_{0}) - (\gamma - k) q_{1})}\right) & k \leq \gamma,\\
        \beta_{1}^{\prime\prime} \sqrt{k (1-k)} \left(1 - (k - \gamma) \frac{(k q_{0} + \gamma q_{1}) (1 - q_{0}) - (q_{0} + \gamma q_{1}) \gamma q_{1}}{(k q_{0} + \gamma q_{1}) (k - k q_{0} - \gamma q_{1})}\right) & k > \gamma,
    \end{cases} \\[3mm]
    \signedBias_H(k)\! &= \hat{\tau}^{\infty}_{\parent} - k \hat{\tau}^{\infty}_{\leftchild} - (1-k) \hat{\tau}^{\infty}_{\rightchild} \\
    &= \begin{cases}
        \gamma \beta_{1}^{\prime\prime} - k \beta_{1}^{\prime\prime} - (1-k) \beta_{1}^{\prime\prime} (\gamma - k) \frac{(q_{0} + \gamma q_{1}) (1 - q_{0} - \gamma q_{1}) - k (q_{0} + q_{1}) (1 - q_{0} - q_{1})}{((1 - k) q_{0} + (\gamma - k) q_{1}) ((1 - k) (1 - q_{0}) - (\gamma - k) q_{1})} & k \leq \gamma,\\
        \gamma \beta_{1}^{\prime\prime} - k \beta_{1}^{\prime\prime} \left(1 - (k - \gamma) \frac{(k q_{0} + \gamma q_{1}) (1 - q_{0}) - (q_{0} + \gamma q_{1}) \gamma q_{1}}{(k q_{0} + \gamma q_{1}) (k - k q_{0} - \gamma q_{1})}\right) - (1-k) \cdot 0 & k > \gamma,
    \end{cases} \\
    &= \begin{cases}
        -\beta_{1}^{\prime\prime} (\gamma - k) \frac{k (1 - \gamma)^{2} q_{1}^{2}}{((1 - k) q_{0} + (\gamma - k) q_{1}) ((1 - k) (1 - q_{0}) - (\gamma - k) q_{1})} & k \leq \gamma,\\
        \beta_{1}^{\prime\prime} (k - \gamma) \frac{(1 - k) \gamma^{2} q_{1}^{2}}{(k q_{0} + \gamma q_{1}) (k - k q_{0} - \gamma q_{1})} & k > \gamma
    \end{cases}
\end{align*}
Note that $\signedHet_H(k)$ and $\signedBias_H(k)$ are both continuous and at \( k=\gamma \) we have
\begin{align*}
    \signedHet_H(\gamma) &= \beta_{1}^{\prime\prime} \sqrt{\gamma (1-\gamma)}, \\
    \signedBias_H(\gamma) &= 0.
\end{align*}
If the treatment effect of the original model \( \tau(x) \) is non-constant, it follows that \( 0 < \gamma < 1, \allowbreak \beta_1 \neq 0 \), and \( \beta_{1}^{\prime\prime} \neq 0 \). This implies \( \signedHet_H(\gamma) \neq 0 \) and, as \( \signedHet_H(0) = \signedHet_H(1) = 0 \), that \( \signedHet_H\) non-constant. We will now show that \( \signedHet_H(\gamma) \) is a local extremum.

Considering the derivative of \( \signedHet_H \) for $k < \gamma$ yields
\begin{align*}
    \frac{\mathrm{d}}{\mathrm{d}k} \signedHet_H(k) &= \beta_{1}^{\prime\prime} \sqrt{k (1-k)} \\ &\phantom{=}\cdot \Big(\frac{1 - 2 k}{2 k (1-k)} + \frac{(q_{0} + \gamma q_{1}) (1 - q_{0} - \gamma q_{1}) - k (q_{0} + q_{1}) (1 - q_{0} - q_{1})}{((1 - k) q_{0} + (\gamma - k) q_{1}) ((1 - k) (1 - q_{0}) - (\gamma - k) q_{1})}\Big) \\
    &- \beta_{1}^{\prime\prime} \sqrt{k (1-k)} (\gamma - k) \\ &\phantom{=}\cdot \Big(\begin{aligned}[t]
        &- \frac{(q_{0} + q_{1}) (1 - q_{0} - q_{1})}{((1 - k) q_{0} + (\gamma - k) q_{1}) ((1 - k) (1 - q_{0}) - (\gamma - k) q_{1})} \\
        &+ (q_{0} + q_{1}) \frac{(q_{0} + \gamma q_{1}) (1 - q_{0} - \gamma q_{1}) - k (q_{0} + q_{1}) (1 - q_{0} - q_{1})}{((1 - k) q_{0} + (\gamma - k) q_{1})^2 ((1 - k) (1 - q_{0}) - (\gamma - k) q_{1})} \\
        &+ (1 - q_{0} - q_{1}) \frac{(q_{0} + \gamma q_{1}) (1 - q_{0} - \gamma q_{1}) - k (q_{0} + q_{1}) (1 - q_{0} - q_{1})}{((1 - k) q_{0} + (\gamma - k) q_{1}) ((1 - k) (1 - q_{0}) - (\gamma - k) q_{1})^2} \\
        &+ \frac{1 - 2 k}{2 k (1-k)} \frac{(q_{0} + \gamma q_{1}) (1 - q_{0} - \gamma q_{1}) - k (q_{0} + q_{1}) (1 - q_{0} - q_{1})}{((1 - k) q_{0} + (\gamma - k) q_{1}) ((1 - k) (1 - q_{0}) - (\gamma - k) q_{1})}\Big)
    \end{aligned}
\end{align*}
and for $k > \gamma$
\begin{align*}
    \frac{\mathrm{d}}{\mathrm{d}k} \signedHet_H(k) &= \beta_{1}^{\prime\prime} \sqrt{k (1-k)}\left(\frac{1 - 2 k}{2 k (1-k)} - \frac{(k q_{0} + \gamma q_{1}) (1 - q_{0}) - (q_{0} + \gamma q_{1}) \gamma q_{1}}{(k q_{0} + \gamma q_{1}) (k - k q_{0} - \gamma q_{1})}\right) \\
    &+ \beta_{1}^{\prime\prime} \sqrt{k (1-k)} (k - \gamma) \Big(\begin{aligned}[t]
        &- \frac{q_{0} (1 - q_{0})}{(k q_{0} + \gamma q_{1}) (k - k q_{0} - \gamma q_{1})} \\
        &+ q_{0} \frac{(k q_{0} + \gamma q_{1}) (1 - q_{0}) - (q_{0} + \gamma q_{1}) \gamma q_{1}}{(k q_{0} + \gamma q_{1})^2 (k - k q_{0} - \gamma q_{1})} \\
        &+ (1 - q_{0}) \frac{(k q_{0} + \gamma q_{1}) (1 - q_{0}) - (q_{0} + \gamma q_{1}) \gamma q_{1}}{(k q_{0} + \gamma q_{1}) (k - k q_{0} - \gamma q_{1})^2} \\
        &- \frac{1 - 2 k}{2 k (1-k)} \frac{(k q_{0} + \gamma q_{1}) (1 - q_{0}) - (q_{0} + \gamma q_{1}) \gamma q_{1}}{(k q_{0} + \gamma q_{1}) (k - k q_{0} - \gamma q_{1})}\Big).
    \end{aligned}
\end{align*}
Note that $\signedHet_H(k)$ is continuously differentiable in $(0, \gamma)$ and in $(\gamma, 1)$.
The limits of these derivatives at \( \gamma \) are
\begin{align*}
    \lim_{k \to \gamma^{-}} \frac{\mathrm{d}}{\mathrm{d}k} \signedHet_H(k) &= \beta_{1}^{\prime\prime} \sqrt{\gamma (1-\gamma)} \frac{q_{0} (1 - q_{0}) + 2 \gamma^{2} q_{1}^{2}}{2 \gamma (1-\gamma) q_{0} (1-q_{0})} \\
    \lim_{k \to \gamma^{+}} \frac{\mathrm{d}}{\mathrm{d}k} \signedHet_H(k) &= -\beta_{1}^{\prime\prime} \sqrt{\gamma (1-\gamma)} \frac{(1-q_{0}-q_{1}) (q_{0}+q_{1}) + 2 (1-\gamma)^{2} q_{1}^{2}}{2 \gamma (1-\gamma) (q_{0} + q_{1}) (1-q_{0}-q_{1})}
\end{align*}

All of \( \gamma, (1-\gamma), q_{0}, (1 - q_{0}), (q_{0}+q_{1}) \) and \( (1-q_{0}-q_{1}) \) are assumed to be positive by Assumptions~A3 and \( \gamma \in (0, 1) \), as well as \( q_{1}^{2} \geq 0 \). Therefore, we have \( \sgn(\lim_{k \to \gamma^{-}} \frac{\mathrm{d}}{\mathrm{d}k} \signedHet_H(k)) = \sgn(\beta_{1}^{\prime\prime}) \) and \( \sgn(\lim_{k \to \gamma^{+}} \frac{\mathrm{d}}{\mathrm{d}k} \signedHet_H(k)) = -\sgn(\beta_{1}^{\prime\prime}) \). 
Also note that \( \sgn(\signedHet_H(\gamma)) = \sgn(\beta_{1}^{\prime\prime}) \).
Together, it follows that $\signedHet_H(\gamma)$ has a local extremum at $k = \gamma$.
\end{proof}
\end{document}